\def\bone{\mathbf{1}}
\def\calC{\mathcal{C}}
\def\calH{\mathcal{H}}
\def\calM{\mathcal{M}}
\def\calI{\mathcal{I}}
\def\calO{\mathcal{O}}
\def\calU{\mathcal{U}}
\def\calX{\mathcal{X}}
\def\calY{\mathcal{Y}}
\def\Supp{\textup{supp}}
\def\bgamma{\bm{\gamma}}
\def\bmu{\bm{\mu}}
\def\bnu{\bm{\nu}}
\def\balpha{\bm{\alpha}}
\def\bbeta{\bm{\beta}}
\def\bG{\mathbf{G}}
\def\bC{\mathbf{C}}
\def\bg{\mathbf{g}}
\def\bp{\mathbf{p}}
\def\bq{\mathbf{q}}
\def\bw{\mathbf{w}}
\def\bx{\mathbf{x}}
\def\by{\mathbf{y}}
\def\bz{\mathbf{z}}
\def\R{\mathbb{R}}
\def\E{\mathbb{E}}
\def\u{\{u\}}
\def\bzero{\mathbf{0}}
\def\argmax{\mathop{\rm arg\,max}\limits}
\def\argmin{\mathop{\rm arg\,min}\limits}
\theoremstyle{plain}
\newtheorem{theorem}{Theorem}[section]
\newtheorem{proposition}[theorem]{Proposition}
\newtheorem{lemma}[theorem]{Lemma}
\theoremstyle{definition}
\newtheorem{assumption}[theorem]{Assumption}
\theoremstyle{remark}
\icmltitlerunning{Submodular Framework for Structured-Sparse Optimal Transport}
\newcommand{\commentKG}[1]{{}}
\def\S{\text{Section}} 
\def\p{\mathbf{p}}
\def\q{\mathbf{q}}
\begin{document}

\twocolumn[
\icmltitle{Submodular Framework for Structured-Sparse Optimal Transport}


\begin{icmlauthorlist}
\icmlauthor{Piyushi Manupriya}{a1}
\icmlauthor{Pratik Jawanpuria}{a2}
\icmlauthor{Karthik S. Gurumoorthy}{a3}
\icmlauthor{SakethaNath Jagarlapudi}{a1}
\icmlauthor{Bamdev Mishra}{a2}
\end{icmlauthorlist}

\icmlaffiliation{a1}{Department of Computer Science and Engineering, IIT Hyderabad, India.}
\icmlaffiliation{a2}{Microsoft, India.}
\icmlaffiliation{a3}{Walmart Global Tech, India}

\icmlcorrespondingauthor{Pratik Jawanpuria}{pratik.jawanpuria@microsoft.com}

\icmlkeywords{Machine Learning, ICML}

\vskip 0.3in
]

\printAffiliationsAndNotice{} 

\begin{abstract}
Unbalanced optimal transport (UOT) has recently gained much attention due to its flexible framework for handling un-normalized measures and its robustness properties. In this work, we explore learning (structured) sparse transport plans in the UOT setting, i.e., transport plans have an upper bound on the number of non-sparse entries in each column (structured sparse pattern) or in the whole plan (general sparse pattern). We propose novel sparsity-constrained UOT formulations building on the recently explored maximum mean discrepancy based UOT. We show that the proposed optimization problem is equivalent to the maximization of a weakly submodular function over a uniform matroid or a partition matroid. We develop efficient gradient-based discrete greedy algorithms and provide the corresponding theoretical guarantees. Empirically, we observe that our proposed greedy algorithms select a diverse support set and we illustrate the efficacy of the proposed approach in various applications.
\end{abstract}

\section{Introduction}\label{intro}

Optimal transport (OT) has emerged as a popular tool in machine learning applications for comparing probability distributions \citep{peyre2019computational}. OT computes the minimal cost to transform one distribution into another and generates a transport plan, offering a deeper understanding of the underlying geometry. The obtained transport plan may be used for aligning the support of the distributions \citep{melis18a,jawanpuria20a}, domain adaptation \citep{courty17b,nath20a}, ecological inference \citep{muzellec17a}, etc. Furthermore, OT has been explored in diverse applications such as generative modeling \citep{arjovsky17a}, shape interpolation \citep{solomon15a,han22a}, prototype selection \citep{gurumoorthy21a}, multi-label classification \citep{frogner15a,jawanpuria21a}, single-cell RNA sequencing \citep{schiebinger19a}, and hypothesis testing \citep{mmd-uot}, to name a few. 

The seminal work of \citet{sinkhorn13} popularized the entropic regularized variants of OT for their computational and generalization benefits. 
However, a notable drawback of entropic regularized OT approaches  is that they usually learn dense transport plans, where sparse (zero) entries are almost non-existent \citep{Blondel2018,liu2023sparsityconstrained}. 
Sparser transport plans are often preferred as they offer more interpretability in alignments \citep{muzellec17a,swanson-etal-2020-rationalizing}. In this regard, existing works \citep{Blondel2018,essid18a} have shown that the squared $2$-norm regularizer for OT leads to a sparse OT plan. More recently, \citet{liu2023sparsityconstrained} introduced an explicit cardinality constraint to control the sparsity level. 
It should be noted that the above works explore sparsity in the balanced OT setups, i.e., when the marginals of the transport plan are enforced to match the given distributions. 

While balanced OT is suitable for many applications, the need for robustness in the case of noisy measures motivates relaxing the marginal matching constraints \citep{frogner15a,jumbot}. This has led to several unbalanced OT (UOT) methods \citep{Liero2018,ChizatPSV18} where a KL-divergence based regularization is employed for (softly) enforcing marginal constraints. Recently, \citet{mmd-uot} proposed a maximum mean discrepancy (MMD) regularized UOT approach, termed as MMD-UOT, as an alternative to KL-regularized UOT. However, to the best of our knowledge, the existing UOT works do not focus on learning (structured) sparse transport plans with explicit cardinality constraints. 

\textbf{Contributions.} In this work, we propose novel sparsity-constrained UOT formulations. In particular, we learn UOT plans with a general sparsity constraint or a column-wise sparsity constraint. 
While the corresponding search space is non-convex and non-smooth, we identify them with well-studied matroid structures such as uniform matroid or partition matroid. Our contributions are as follows. 
\begin{itemize}[itemsep=1pt,topsep=0pt]
\item We show that the MMD-UOT problem \citep{mmd-uot}, when viewed as a function of the support set of transport plan, is equivalent to maximizing a weakly submodular function. This allows us to view our proposed sparsity-constrained UOT problems as maximizing a weakly submodular function over a  matroid constraint. 
\item We propose novel efficient gradient-based greedy algorithms (Algorithms \ref{alg:stoch-dash} and \ref{alg:colsparseOT_dash}) with attractive theoretical guarantees for maximizing a weakly submodular function over a (uniform or partition) matroid constraint. While the algorithms can be readily applied to solve our proposed constrained UOT formulations, they are also of independent interest for maximizing a general weakly submodular function.



\item A salient feature of our investigation is the dual analysis of (non-convex) weakly submodular problems. The usual approximation results corresponding to greedy maximization of (weakly) submodular functions are on  lower bounds. While these lower bounds capture the worst case performance, often in practice, they do not explain the good performance of the greedy algorithms. In this context, the duality gap analysis provides a more optimistic bound on the performance. 



\item Finally, we empirically demonstrate the effectiveness of the proposed approach in several applications.
\end{itemize}
The proofs of our theoretical results and additional experimental details are provided in the appendix sections.

\section{Preliminaries}\label{prelim}
We begin with a few notations. 
Let  $X:=\{\mathbf{x}_i\}_{i=1}^m$ and $Y:=\{\mathbf{y}_j\}_{j=1}^n$ be the source and target datasets, respectively, where $\bx_i\in\calX$ and $\by_i\in\calY$. The corresponding empirical distributions may be written as $\mu\coloneqq \sum_{i=1}^m \bmu_i \delta_{\bx_i}$ and $\nu\coloneqq \sum_{j=1}^n \bnu_i \delta_{\by_j}$, where $\bmu_i$ and $\bnu_j$ denote the mass associated with samples $\bx_i$ and $\by_j$, respectively, and $\delta_{\bz}$ represents the Dirac measure centered on $\bz$. Let $\bone$ and $\bzero$ denote a vector/matrix of ones and zeros, respectively, whose size could be understood from the context. Then, $\bmu\in\Delta_m$ and $\bnu\in\Delta_n$, where $\Delta_d=\{\bz\in\R^d_{+}:\bz^\top\bone = 1\}$.   
For $m \in \mathbb{N}$, let $[m] = \{1,2,\dots, m\}$. 
Let $V\equiv \{(i,j): i\in[m]; j\in [n]\}$ represent the index set of an $m\times n$ matrix. 
Let ${\rm vec}(\mathbf{M})$ denote the vectorization of the matrix $\mathbf{M}$, and for an index $u\equiv (i, j)$, $\mathbf{g}_u$ denotes the element $\mathbf{g}[i, j]$. 
For a non-negative vector $\bz\in\R^d_{+}$, the indices of non-zero entries in $\bz$  (its support) are denoted by the set $\Supp(\bz)=\{i\in [d]:\bz_i>0\}$. 

\subsection{Optimal Transport}\label{ot}
Optimal transport (OT) quantifies the distance between two distributions $\mu$ and $\nu$ while incorporating the geometry over their supports. Let $\bC\in \R_+^{m\times n}$ be a cost matrix induced by a cost metric $c:\calX \times \calY\mapsto\R_+$ such that $\bC_{ij}=c(\bx_i, \by_j)$. \citet{KatoroOT} proposed the OT problem between $\bmu$ and $\bnu$ as 
$\min_{\bgamma\in \Gamma(\bmu, \bnu)}\langle\bC, \bgamma\rangle,$ 
where $\Gamma(\bmu, \bnu):=\{\bgamma\in \R_+^{m\times n}: \bgamma\bone = \bmu; \bgamma^\top \bone = \bnu\}$. 
This is a \textit{balanced} OT problem due to the presence of mass preservation constraints $\bgamma\bone = \bmu$ and $\bgamma^\top \bone = \bnu$. 
The transport plan $\bgamma$ is a joint distribution with marginals $\bmu$ and $\bnu$ and supported over the (index) set $V$. 

Recent works have explored relaxing the mass-preservation constraint of classical OT for settings where measures are noisy \cite{Balaji20} or un-normalized \cite{ChizatPSV18}. Unbalanced optimal transport (UOT) replaces the constraint $\bgamma\in \Gamma(\bmu, \bnu)$ with regularizers $\mathcal{D}(\bgamma\bone, \bmu)$ and $\mathcal{D}(\bgamma^\top \bone, \bnu)$, which promote the marginals of $\bgamma$ to be close to the given $\bmu$ and $\bnu$ distributions. Here, $\mathcal{D}$ is a divergence or distance between distributions such as KL-divergence \citep{jumbot}, maximum mean discrepancy (MMD) \citep{gretton12a}, etc. 
A recent work \citep{mmd-uot} has studied MMD regularization for the UOT problem. 
Given a cost matrix $\bC$ and a universal kernel $k$ \citep{SriperumbudurFL11}, MMD-UOT \citep{mmd-uot} is the following convex problem:
\begin{equation}\label{eqn:uotmmd}
\begin{array}{l}
\qquad \qquad \min\limits_{\bgamma\geq \bzero} \ \ \calU(\bgamma), \ \rm{where} \\
\calU(\bgamma)\coloneqq \langle\mathbf{C},\bgamma\rangle 
+ \lambda_1{\rm MMD}^2_k(\bgamma\bone,\bmu) \\

\qquad \qquad \quad \quad \quad + \lambda_1{\rm MMD}^2_k(\bgamma^\top\bone,\bnu)+ \frac{\lambda_2}{2}\|\bgamma\|^2.
\end{array}
\end{equation}
Here, ${\rm MMD}_k(\bgamma\bone,\bmu) = \|\bgamma\bone-\bmu\|_{\bG_1}$, ${\rm MMD}_k(\bgamma^\top\bone,\bnu) = \|\bgamma^\top\bone-\bnu\|_{\bG_2}$, $\bG_{1}$ and $\bG_{2}$ are the Gram matrices corresponding to kernel $k$ over the source and target points, respectively, and $\|\bz\|_\bG = \sqrt{\bz^\top\bG\bz}$. We may additionally employ a squared $\ell_2$-norm regularization ($\lambda_2\geq 0$) for computational benefits \citep{Blondel2018}. 


\subsection{Submodularity} 
Submodularity is a property of set functions that exhibit diminishing returns. Given two sets $A$ and $B$ such that $A \subseteq B\subseteq V$, a set function is submodular if and only if for any $u \notin B$, $F(A\cup \u)-F(A) \geq F(B\cup \u)-F(B)$. The term, $F(A\cup \u)-F(A)$, is the marginal gain on adding an element $u$ to set the $A$ and is popularly denoted as $F(u|A)$. Likewise $F(S|B)$ denotes $F(B\cup S)-F(B)$. The set function is monotone increasing iff $F(A) \leq F(B)$ when $A \subseteq B\subseteq V$. For non-negative monotone submodular maximization problem, $\max_{S\subseteq V,|S|\leq K} F(S)$, \citet{DBLP:journals/mp/NemhauserWF78} showed that the classical greedy algorithm obtains a $(1-e^{-1})$ approximation to the optimal objective. 


Another naturally occurring structure is that of a matroid defined as follows. Given a non-empty collection $\calI\subseteq 2^V$, the pair $\calM = (V, \calI)$ is a matroid if for every two sets $A, B\subseteq V$, the following are satisfied: (i) $\emptyset\in\calI$; (ii) If $A\subseteq B$ and $B\in \calI$, then $A\in \calI$; and (iii) If $|A|< |B|$ and $A, B\in \calI$, then $\exists u\in B\setminus A$ such that $A\cup \u \in \calI$. The elements of set $\calI$ are called the independent sets of matroid $\calM$. A set $X\subseteq V$ such that $X\notin \mathcal{I}$ is called a dependent set of $\mathcal{M}$. A maximal independent set that becomes dependent upon adding any element of $V$ is called a base for the matroid. Given a matroid $\calM= (V, \calI)$, the associated matroid constraint is $S\in\calI(\calM)$, which implies that set $S$ is an independent set of $\calM$. 

A function is said to exhibit a weaker notion of submodularity, characterized by $\bm{\alpha}$-weakly submodular \cite{DasKempe18} for some $\alpha\in (0, 1]$, if $\sum_{u\in S}F(u|B)\geq \alpha.F(S|B)$ for all $S, B\subseteq V$. Similar to submodular functions, constant-factor approximation guarantees also exist for maximizing a weakly submodular set function under cardinality and matroid constraints \cite{DasKempe18, pmlr-v80-chen18b}.

\subsection{Restricted Strong Concavity and Restricted Smoothness}\label{Prel:rsc} 
On a domain $\Omega\subset \R^N \times \R^N$, a function $l:\R^N\mapsto \R$ is said to be restricted strong concave (RSC) with parameter $u_\Omega$ and restricted smooth (RSM) with parameter $U_\Omega$ 
if for all $(\bx, \by) \in \Omega$, the following holds \citep{Elenberg18}: 
$$\frac{u_\Omega}{2}\|\by-\bx\|_2^2\leq l(\bx) - l(\by)+\langle\nabla l(\bx), \by-\bx\rangle \leq \frac{U_\Omega}{2}\|\by-\bx\|_2^2.$$
We denote the RSC and RSM parameters on the domain $\Omega_K =\{ (\bx, \by): \bx \geq 0; \|\bx\|_0 \leq K;   \by \geq 0; \|\by\|_0 \leq K\}$ of all K-sparse non-negative vectors by $u_{K}$ and $U_{K}$, respectively. This set is of interest as we aim to learn non-negative transport plans with at most $K$ non-zero entries. Also, let $\tilde{\Omega} = \{(\bx,\by): \|\bx-\by\|_0 \leq 1\}$  with the corresponding smoothness parameter $\tilde{U}_1$. It can be easily verified that if $\hat{K} \leq K$, then $u_{\hat{K}} \geq u_K$ and $U_{\hat{K}} \leq U_K$ as $\Omega_{\hat{K}} \subseteq \Omega_{K}$.


\section{Proposed Method}\label{proposed-method}
Given a source $\mu$ and a target $\nu$ distributions, we now propose a novel submodular framework for structured-sparse UOT. In this regard, we generalize the MMD-UOT formulation (\ref{eqn:uotmmd}) by introducing additional (structured) sparsity constraints on the transport plan as follows:
\begin{equation}\label{eqn:sparse-uotmmd}
\min\limits_{\bgamma\in \calC}\ \calU({\bgamma}),
\end{equation}
where $\calU: \R^{m\times n}_{+}\mapsto \R_{+}$ is the function defined in (\ref{eqn:uotmmd}) and $\calC$ denotes a set of sparsity constraints. In this work, we focus on two different sparsity constraints: (a) $\calC\equiv\calC_1\coloneqq\{\bgamma\in\R^{m\times n}_{+}:\|{\rm vec}(\bgamma)\|_0\leq K_1\}$ or (b) $\calC\equiv\calC_2\coloneqq\{\bgamma\in\R^{m\times n}_{+}:\|\bgamma_j\|_0\leq K_2\ \forall\, j\in[n]\}$, where $\|\cdot\|_0$ denotes the $\ell_0$-norm and $\bgamma_j$ denotes the $j$-th column of matrix $\bgamma$. 
While $\calC_1$ imposes a cardinality constraint on the entire transport plan $\bgamma$, $\calC_2$ imposes the cardinality constraint on each column of $\bgamma$. Note that MMD-UOT formulation~(\ref{eqn:uotmmd}) is a special case of Problem (\ref{eqn:sparse-uotmmd}), e.g., when $K_1= mn$ or $K_2= m$.


Problem (\ref{eqn:sparse-uotmmd}) is non-convex over a non-smooth search space $\calC$, and hence tricky to optimize even though the objective $\calU$ is a convex function. However, we note that the constraint sets $\calC_1$ or $\calC_2$ essentially restrict the support of the transport plan $\bgamma$ to certain patterns which may be modeled using a matroid structure. 
For instance, the set $\calC_1$ may equivalently be represented as a uniform matroid $\calM_1=(V,\calI_1)$ where $\calI_1=\{S\subseteq V: |S|\leq K_1\}$. 
Similarly, the set $\calC_2$ may be equivalently modeled using a partition matroid $\calM_2=(V,\calI_2)$ where $\calI_2=\{S: S\subseteq V; |S\cap P_j|\leq K_2\, \forall\, j\in [n]\}$ with $P_j=\{(i, j): i\in [m]\}$. 
Due to this interesting correspondence between the sparsity constraints $\calC_1$ or $\calC_2$ and the matroids, we equivalently pose the continuous Problem (\ref{eqn:sparse-uotmmd}) as the following maximization problem over discrete sets representing the support of $\bgamma$:
\begin{equation}\label{eqn:reform}
\max\limits_{S\in \calI(\calM)} F(S) (\coloneqq \calU(\bzero)-\min\limits_{\bgamma:\Supp(\bgamma)\subseteq S,\bgamma\geq \bzero} \calU({\bgamma})),
\end{equation}
where the matroid $\calM$ corresponds to either the uniform matroid ($\calM=\calM_1$) or the partition matroid ($\calM=\calM_2$). Hence, we decouple the non-convex non-smooth problem (\ref{eqn:sparse-uotmmd}) into a discrete optimization problem (\ref{eqn:reform}) whose objective evaluation requires solving a convex problem. 
For a candidate set $S\in \calI(\calM)$, computing $F(S)$ essentially requires solving the MMD-UOT problem (\ref{eqn:uotmmd}) with the support of $\bgamma$ restricted to set $S$. Since the objective $\calU(\bgamma)$ is $L$-smooth, (\ref{eqn:uotmmd}) can solved using the accelerated projected gradient descent (APGD) method with a fixed step size of $1/L$ and has a linear convergence rate \cite{mmd-uot}. 

A key outcome of the above reformulation is our next result, which proves that the set function $F(\cdot)$ is weakly submodular under mild assumptions on the kernel employed in (\ref{eqn:reform}). Please refer to Appendix~\ref{kernels} for more details. 
\begin{lemma}\label{lemma:subm}
$F(.)$ is a monotone, non-negative, and $\alpha$-weakly submodular function with the submodularity ratio $\alpha\geq \frac{u_{2K}}{\tilde{U}_1}>0$, where $K$ denotes the sparsity level of the transport plan $\bgamma$. Here, $K=K_1$ for $\calM=\calM_1$ and $K=nK_2$ for $\calM=\calM_2$. 
\end{lemma}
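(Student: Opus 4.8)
The plan is to prove the three asserted properties of the set function $F(S)=\calU(\bzero)-\min\{\calU(\bgamma):\Supp(\bgamma)\subseteq S,\ \bgamma\geq\bzero\}$ by viewing it as the set‑function ``lift'' of a restricted strongly concave and restricted smooth objective, and then adapting the argument of \citet{Elenberg18} to the extra non‑negativity constraint in the inner problem. Write $l(\bgamma):=-\calU(\bgamma)$, which is concave because $\calU$ is convex, so that $F(S)=g(S)-l(\bzero)$ with $g(S):=\max\{l(\bgamma):\Supp(\bgamma)\subseteq S,\ \bgamma\geq\bzero\}$. Non‑negativity and $F(\emptyset)=0$ are immediate since $\bgamma=\bzero$ is feasible for every $S$, giving $g(S)\geq l(\bzero)$; monotonicity holds because $A\subseteq B$ makes the feasible set for $A$ a subset of that for $B$, so $g(A)\leq g(B)$.

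The substance is the submodularity ratio. Fix $B$ and $S$; since elements of $S\cap B$ contribute zero both to $\sum_{u\in S}F(u\mid B)$ and (after removal) to $F(S\mid B)$, I may assume $S\cap B=\emptyset$, and — this being the regime relevant to greedy maximization over $\calM_1$ (independent sets of size $\leq K_1$) and $\calM_2$ (size $\leq nK_2$) — take $|B\cup S|\leq 2K$. Let $\bgamma^B$ and $\bgamma^{B\cup S}$ be optimal for the inner problem on supports $B$ and $B\cup S$, and abbreviate $\bq:=\nabla l(\bgamma^B)$. For a lower bound on $\sum_{u\in S}F(u\mid B)$: for each $u\in S$ the point $\bgamma^B+t\,\mathbf{e}_u$ is feasible on support $B\cup\{u\}$ for every $t\geq 0$, so the single‑coordinate restricted‑smoothness inequality (constant $\tilde{U}_1$) followed by maximizing the resulting quadratic in $t$ over $t\geq 0$ gives $F(u\mid B)\geq \big((\bq_u)_+\big)^2/(2\tilde{U}_1)$, whence $\sum_{u\in S}F(u\mid B)\geq \tfrac{1}{2\tilde{U}_1}\sum_{u\in S}\big((\bq_u)_+\big)^2$. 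For an upper bound on $F(S\mid B)=l(\bgamma^{B\cup S})-l(\bgamma^B)$: applying the restricted strong concavity inequality (constant $u_{2K}$, valid since $\bgamma^B$ and $\bgamma^{B\cup S}$ are $2K$‑sparse and non‑negative) gives $F(S\mid B)\leq \langle\bq,\ \bgamma^{B\cup S}-\bgamma^B\rangle-\tfrac{u_{2K}}{2}\|\bgamma^{B\cup S}-\bgamma^B\|^2$; splitting the right side coordinate‑wise into the blocks $B$ and $S$, the KKT conditions of the non‑negativity‑constrained inner problem ($\bq_v\leq 0$ and $\bq_v\,\bgamma^B_v=0$ for $v\in B$) force the $B$‑block to be non‑positive, while maximizing the $S$‑block over $\bgamma^{B\cup S}_u\geq 0$ gives $F(S\mid B)\leq \tfrac{1}{2u_{2K}}\sum_{u\in S}\big((\bq_u)_+\big)^2$. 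Comparing the two bounds (and using $F(S\mid B)\geq 0$ from monotonicity) yields $\sum_{u\in S}F(u\mid B)\geq \tfrac{u_{2K}}{\tilde{U}_1}\,F(S\mid B)$, i.e.\ $\alpha\geq u_{2K}/\tilde{U}_1$. Positivity of $u_{2K}$ is where the assumption on the kernel enters: it is guaranteed outright by the $\tfrac{\lambda_2}{2}\|\bgamma\|^2$ term whenever $\lambda_2>0$, and more generally follows when the Gram matrices $\bG_1,\bG_2$ render the MMD part strongly convex on $2K$‑sparse vectors, as discussed in Appendix~\ref{kernels}.

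I expect the main obstacle to be precisely this KKT step. In the unconstrained setting of \citet{Elenberg18} one has $\nabla_B l(\bgamma^B)=\bzero$ exactly, which makes the $B$‑block of the first‑order term vanish; here, because the inner minimization is over the non‑negative orthant, a coordinate $v\in B$ with $\bgamma^B_v=0$ can have $\bq_v<0$, so one must instead invoke complementary slackness to argue $\bq_v(\bgamma^{B\cup S}_v-\bgamma^B_v)=\bq_v\,\bgamma^{B\cup S}_v\leq 0$ and conclude the block is non‑positive rather than zero. The same one‑sidedness forces the positive parts $(\bq_u)_+$ throughout (the admissible perturbations $t\,\mathbf{e}_u$ have $t\geq 0$), and a secondary bookkeeping point is keeping the three sparsity levels straight — the per‑instance budget $K$ ($=K_1$ for the uniform matroid, $=nK_2$ for the partition matroid), its double $2K$ controlling $u_{2K}$, and the single‑coordinate smoothness $\tilde{U}_1$ — so that the nesting $\Omega_K\subseteq\Omega_{2K}$ is used in the correct direction when invoking the two restricted‑curvature inequalities.
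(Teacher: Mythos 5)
Your proposal is correct and follows essentially the same route as the paper's proof: monotonicity and non-negativity from the nested feasible sets, a per-element lower bound $F(u\mid B)\geq \frac{1}{2\tilde{U}_1}\bigl((\bg^+_u)\bigr)^2$ via single-coordinate restricted smoothness, and an upper bound $F(S\mid B)\leq \frac{1}{2u_{\bar m}}\sum_{u\in S}\bigl(\bg^+_u\bigr)^2$ via restricted strong concavity combined with the KKT/complementary-slackness conditions of the non-negativity-constrained inner problem, exactly as in Lemmas~\ref{lemma2.1} and \ref{lemma2.2} (following \citet{Elenberg18,gurumoorthy19a}). The only cosmetic difference is that you bound the $B$-block directly at $\bgamma_{B\cup S}$ coordinate-wise, whereas the paper bounds it through the explicit maximizer $\mathbf{W}^*$ of the concave quadratic surrogate over the non-negative orthant; the two arguments are equivalent.
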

The proof of Lemma~\ref{lemma:subm} is discussed in Appendix \ref{app-lemma:subm}. 
In the following sections, we propose efficient greedy algorithms with attractive approximation guarantees for maximizing our weakly submodular problem (\ref{eqn:reform}). 


\subsection{Learning (General) Sparse Transport Plan}\label{gen-sparse}
As discussed, sparse transport plans are more interpretable and are useful in applications such as designing topology \citep{ijcai2023p679}, word alignment \citep{arase-etal-2023-unbalanced}, etc. To this end, we consider solving (\ref{eqn:reform}) with $\calM = \calM_1$, i.e., 
\begin{equation}\label{eqn:gensparse}
\max\limits_{S\in \calI_1(\calM_1)} F(S). 
\end{equation}
This problem learns a sparse transport plan with a maximum of $K=K_1$ non-sparse and we term it as \textbf{GenSparseUOT}. 
Since (\ref{eqn:gensparse}) is a monotone, non-negative, and $\alpha$-weakly submodular maximization problem with cardinality constraint, the classical greedy method gives a constant-factor approximation guarantee of $F(S_K)\geq(1-e^{-\alpha}) {\rm OPT}$ \cite{DasKempe18}. Here, $S_K$ is the solution returned by the greedy algorithm and ${\rm OPT}$ is the optimal objective of (\ref{eqn:gensparse}). 
The classical greedy algorithm begins with an empty set $S_0 = \emptyset$ and at each iteration $i$, it finds an element $u\in V\setminus S_{i-1}$ such that the marginal gain $F(u|S_{i-1})$ is maximized. 
Hence, in the context of solving (\ref{eqn:gensparse}), the classical greedy algorithm requires solving various instances of MMD-UOT $mnK-K(K-1)/2$ times. 
The classical greedy algorithm is detailed in Algorithm~\ref{alg:gensparseOT_greedy}. 

Since the function $-\calU(\cdot)$ in the definition of $F(S)$ has RSC and RSM properties (Lemma~\ref{lemma:rsc-rsm}), we propose to employ a computationally efficient orthogonal matching pursuit (OMP) based greedy algorithm \citep{Elenberg18,gurumoorthy19a} for solving (\ref{eqn:gensparse}). 
A key feature of such strategies is that they greedily select the next  element which maximally correlates with the residual of what has already been selected, i.e., choosing the element corresponding to the largest gradient value.  
In our case, this implies solving the MMD-UOT problem (\ref{eqn:uotmmd}) for a given support set $S$ (Appendix~\ref{app:solver}) and using its solution $\bgamma$ to compute the gradient $-\nabla\calU(\bgamma_{S})$ (\ref{eqn:gradient}) for a candidate set of elements $R$. 

\begin{algorithm}[t]
\caption{Stochastic OMP algorithm for maximizing weakly submodular problems with cardinality constraint}
\begin{algorithmic}\label{alg:stoch-dash}
\STATE \textbf{Input:}  $\lambda_1, \lambda_2, \bmu, \bnu, \bC, \bG_1, \bG_2$, sparsity level $K$, $\epsilon$.
\STATE $i = 1, S_0=\emptyset, \bgamma_{S_0} = \bzero$ and $\mathbf{g} = -\nabla \calU(\bgamma_{S_0})$.
\WHILE {$i\leq K$}
\STATE \textbf{1.} Set $R_i$ as a random subset of $V\setminus S_{i-1}$ with  $mnK^{-1}\log(1/\epsilon)$ elements 
\STATE \textbf{2.} $u = \argmax_{e\in R_i}{\mathbf{g}_e}$
\STATE \textbf{3.} $S_i = S_{i-1}\cup \{u\}$
\STATE \textbf{4.} $\bgamma_{S_i} = \argmin_{\bgamma:\Supp(\bgamma)\subseteq S_i, \bgamma\ge \bzero}{\calU(\bgamma)}$ 
\STATE \textbf{5.} $\mathbf{g}= -\nabla \calU(\bgamma_{S_i})$
\STATE \textbf{6.} $i=i+1$
\ENDWHILE
\STATE \textbf{return} $S_K, \bgamma_{S_K}$
\end{algorithmic}
\end{algorithm}

In Algorithm~\ref{alg:stoch-dash}, we propose a stochastic greedy algorithm for maximizing weakly submodular problems with cardinality constraint. It employs the above discussed OMP technique for greedy selection. 
We observe that Algorithm~\ref{alg:stoch-dash} requires solving the MMD-UOT problem (\ref{eqn:uotmmd}) of size $|i|$ only once in each iteration $i$. 
Step~1 in Algorithm~\ref{alg:stoch-dash} corresponds to stochastic selection of the candidate set $R_i$ for every iteration $i$ \citep{mirzasoleiman15a}. 
The vanilla non-stochastic OMP algorithm for maximizing weakly submodular problems with cardinality constraint \citep{gurumoorthy19a} is presented in Algorithm \ref{alg:gensparseOT_dash}. 
Compared to its (non-stochastic) counterpart, Algorithm~\ref{alg:stoch-dash} is more efficient as the gradient (step~6) is computed only for a subset of the remaining elements.  
The approximation guarantee provided by Algorithm~\ref{alg:stoch-dash} is as follows. 
\begin{lemma}\label{lemma:stoch-dash}
Let $\{S_K,\bgamma_{S_K}\}$ be a solution returned by the proposed Algorithm~\ref{alg:stoch-dash}, where $S_K$ is the support of the transport plan $\bgamma_{S_K}$. Let $S^{*}$ be an optimal solution of Problem~(\ref{eqn:gensparse}). 
Then, $\mathbb{E}[F(S_K)] \geq (1- e^{-u_{2K}/\tilde{U}_1}-\epsilon) F(S^{*})$. 
\end{lemma}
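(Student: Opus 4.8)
The plan is to combine two known ingredients: (i) the approximation analysis for stochastic greedy selection on monotone weakly submodular functions under a cardinality constraint, and (ii) the fact, established in Lemma~\ref{lemma:subm} and the OMP results of \citet{Elenberg18,gurumoorthy19a}, that the OMP selection rule (picking the coordinate of largest residual gradient $\mathbf{g}_e$) is a good surrogate for greedy marginal-gain selection because $-\calU$ is restricted strongly concave/smooth. Concretely, I first recall that for the deterministic OMP-greedy algorithm (Algorithm~\ref{alg:gensparseOT_dash}) one has the per-iteration guarantee that the element $u$ chosen by the gradient rule satisfies a bound of the form $F(u\mid S_{i-1}) \geq \frac{u_{2K}}{\tilde U_1}\,\frac{1}{K}\big(F(S^*) - F(S_{i-1})\big)$; this is exactly the mechanism by which the submodularity ratio $\alpha \geq u_{2K}/\tilde U_1$ enters, and it is essentially the content of the cited OMP analysis (the RSC/RSM constants turn a gradient-maximizing step into an approximate marginal-gain-maximizing step over the whole ground set).

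Second, I would handle the stochasticity introduced in Step~1. The candidate set $R_i$ is a uniform random subset of $V\setminus S_{i-1}$ of size $\frac{mn}{K}\log(1/\epsilon)$. Following the argument of \citet{mirzasoleiman15a}, the probability that $R_i$ misses \emph{all} elements of the optimal set $S^*$ (more precisely, all of a fixed size-$\le K$ "good" set witnessing the gain) is at most $(1 - K/(mn))^{|R_i|} \le e^{-|R_i|\,K/(mn)} = \epsilon$. Conditioning on the complementary event and taking expectations, the expected gain of the stochastically chosen $u$ is at least $(1-\epsilon)$ times the gain of the best element with respect to the gradient surrogate. Chaining this with the per-iteration bound from the previous paragraph yields
\begin{equation*}
\mathbb{E}\big[F(S_i) - F(S_{i-1}) \,\big|\, S_{i-1}\big] \;\geq\; \frac{1-\epsilon}{K}\cdot\frac{u_{2K}}{\tilde U_1}\,\big(F(S^*) - \mathbb{E}[F(S_{i-1})\mid S_{i-1}]\big).
\end{equation*}

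Third, I would take total expectations and unroll this linear recursion over $i = 1,\dots,K$ in the standard way: writing $\delta_i := F(S^*) - \mathbb{E}[F(S_i)]$, the recursion gives $\delta_i \le \big(1 - \tfrac{(1-\epsilon)u_{2K}}{K\tilde U_1}\big)\delta_{i-1}$, hence $\delta_K \le \big(1 - \tfrac{(1-\epsilon)u_{2K}}{K\tilde U_1}\big)^K \delta_0 \le e^{-(1-\epsilon)u_{2K}/\tilde U_1}\,\delta_0$. Since $F$ is non-negative and $F(\emptyset)=\calU(\bzero)-\calU(\bzero)=0$, we have $\delta_0 = F(S^*)$, and therefore $\mathbb{E}[F(S_K)] \ge \big(1 - e^{-(1-\epsilon)u_{2K}/\tilde U_1}\big)F(S^*)$. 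Finally I would simplify $e^{-(1-\epsilon)u_{2K}/\tilde U_1} \le e^{-u_{2K}/\tilde U_1} + \epsilon$ (using $e^{-a(1-\epsilon)} \le e^{-a}e^{a\epsilon} \le e^{-a}(1+\epsilon) \le e^{-a} + \epsilon$ for $a = u_{2K}/\tilde U_1 \le 1$, or more directly convexity of the exponential) to recover the stated form $\mathbb{E}[F(S_K)] \ge (1 - e^{-u_{2K}/\tilde U_1} - \epsilon)F(S^*)$.

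The main obstacle, and the step needing the most care, is the first one: rigorously showing that the gradient-based OMP selection rule in Step~2 realizes a marginal gain comparable to the true greedy step, with the ratio controlled by $u_{2K}/\tilde U_1$ rather than a weaker constant. This requires carefully invoking restricted strong concavity (to lower bound the improvement from adding the selected coordinate and re-optimizing in Step~4) and restricted smoothness (to upper bound, via the gradient, how much \emph{any} size-$\le K$ set could improve the objective), and matching the $2K$-sparse RSC constant $u_{2K}$ to the fact that $S_{i-1}\cup S^*$ can have up to $2K$ elements. The stochastic and recursion parts are then routine adaptations of \citet{mirzasoleiman15a} and the Nemhauser–Wolsey–Fisher-style telescoping argument. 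I would defer the detailed constant-chasing to the appendix.
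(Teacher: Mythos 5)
Your proposal follows essentially the same route as the paper's proof: the RSM-based lower bound on the gain of the selected coordinate and the RSC-based upper bound on the gain of any remaining set (giving the $u_{2K}/\tilde{U}_1$ ratio), the Mirzasoleiman-style sampling argument for the random candidate set, the standard unrolled recursion, and the final absorption of $\epsilon$. Two small slips to fix when writing it out: the hit-probability step must compare the chosen element to a \emph{uniformly random} element of $S^*\setminus S_{i-1}$ with hit probability $\geq (1-\epsilon)\lvert S^*\setminus S_{i-1}\rvert/K$ (obtained by a concavity bound, since $\lvert S^*\setminus S_{i-1}\rvert$ may be smaller than $K$), rather than the cruder ``misses all of $S^*$ with probability $\le\epsilon$'' statement; and your explicit chain $e^{a\epsilon}\le 1+\epsilon$ is false in general (e.g.\ $a=1,\epsilon=0.5$), so you should use the convexity argument you mention instead, e.g.\ $e^{-a(1-\epsilon)}\le(1-\epsilon)e^{-a}+\epsilon\le e^{-a}+\epsilon$, which is exactly how the paper concludes.
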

The proof of Lemma~\ref{lemma:stoch-dash} is discussed in Appendix \ref{app-lemma:stoch-dash}.


\subsection{Learning Column-wise Sparse Transport Plan}\label{subsec:partitionmatroid}
We now consider learning the transport plan $\bgamma$ with column-wise sparsity constraint, i.e., every column of $\bgamma$ has at most $K_2$ non-sparse entries. 
Such an OT approach is useful in learning a sparse mixture of experts \cite{liu2023sparsityconstrained}. 
To this end, we consider solving (\ref{eqn:reform}) with $\calM = \calM_2$, i.e., 
\begin{equation}\label{eqn:colsparse}
\max\limits_{S\in \calI_2(\calM_2)} F(S).
\end{equation}
The partition matroid constraint ensures that (\ref{eqn:colsparse}) learns a transport plan in which each column has at most $K_2$ non-sparse entries. We term the proposed problem (\ref{eqn:colsparse}) as \textbf{ColSparseUOT}. The total number of non-sparse entries in the learned transport plan $\bgamma$ is $K=nK_2$.  
We note that (\ref{eqn:colsparse}) can alternatively learn row-sparse transport plans as well. 

Algorithm~\ref{alg:stoch-dash} cannot be directly employed for solving (\ref{eqn:colsparse}) as its greedy selection does not respect the partition matroid constraint. Hence, we consider the residual randomized greedy approach for matroids \cite{pmlr-v80-chen18b}, which provides a $(1+1/\alpha)^{-2}$ approximation guarantee for $\alpha$-weakly submodular maximization subject to a general matroid constraint. However, it has a high computational cost as it requires solving multiple MMD-UOT instances in each iteration. We propose a novel OMP-based greedy algorithm, Algorithm~\ref{alg:colsparseOT_dash}, for efficiently maximizing weakly submodular problems with a general matroid constraint. 

In each iteration $i$, Algorithm~\ref{alg:colsparseOT_dash} selects a uniformly random element from the best maximal independent set (base) of $\calM_2/S_{i-1}$. 
Here, $\calM_2/S = \left(V\setminus S, \calI_{\calM_2/S}\right)$ denotes the contraction of $\calM_2$ by $S$, which is a matroid on $V\setminus S$ consisting of independent sets $\calI_{\calM_2/S} :=\{I \subseteq V\setminus S: I \cup S \in \calI \}$. 
The gradient $\nabla \calU(\bgamma_{S_{i}})$ is computed via by (\ref{eqn:gradient}). It should be noted that in every iteration, the gradient needs to be computed only for the elements in $R=\{u: u\in I, I\in \calI_{\calM_2/S}\}$. 
The solution $\bgamma_{S}$ in step~4 is obtained efficiently using the APGD algorithm. It should be noted that step~1 of Algorithm~\ref{alg:colsparseOT_dash} may not require a search over all possible maximal independent sets of $\calM_2/S$. For partition matroids, step~1 essentially involves selecting the top-$(K_2-|S\cap P_j|)$ elements with the largest (thresholded) gradient values from the set $P_j\setminus S$. 
The approximation guarantee provided by our proposed Algorithm~\ref{alg:colsparseOT_dash} is as follows. 
\begin{lemma}\label{lemma:struc-dash}
Let $\{S_{K},\bgamma_{S_{K}}\}$ be the solution returned by our Algorithm \ref{alg:colsparseOT_dash}, where $S_{K}$ is the support of the transport plan $\bgamma_{S_{K}}$. Let $S^*$ be an optimal solution of (\ref{eqn:colsparse}). Then, 
$$\mathbb{E}[F(S_{K})]\geq F(S^*)\left(1+\tilde{U}_1/u_{2K}\right)^{-2}.$$
\end{lemma}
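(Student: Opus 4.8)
The plan is to recognize Algorithm~\ref{alg:colsparseOT_dash} as the residual randomized greedy of \citet{pmlr-v80-chen18b} for matroid-constrained weakly submodular maximization, but with the (expensive) exact marginal gains $F(u\mid S_{i-1})$ replaced by the OMP surrogate weight $w^{(S)}_u \coloneqq \big(\max\{0,\,-\nabla\calU(\bgamma_S)_u\}\big)^2$, and then to re-run their matroid approximation analysis with this substitution, tracking how the submodularity ratio $\alpha$ gets replaced by $\beta\coloneqq u_{2K}/\tilde U_1$. So the work splits into (i) a deterministic ``sandwich'' relating $w^{(S)}_u$ to $F$ via the RSC/RSM properties of $-\calU$ (Lemma~\ref{lemma:rsc-rsm}), and (ii) feeding that sandwich into the (combinatorial, randomized) argument of \citet{pmlr-v80-chen18b}.

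For step (i), I would first prove the lower bound: for any feasible $S$ (so $|S|\le K$) and $u\notin S$, taking the best non-negative step from $\bgamma_S$ along coordinate $u$ and invoking restricted smoothness with parameter $\tilde U_1$ gives $F(u\mid S)\ge w^{(S)}_u/(2\tilde U_1)$, since $\bgamma_{S\cup\{u\}}$ improves on that step. Then the upper bound: comparing $\bgamma_S$ with $\bgamma_{S\cup S^*}$ — both supported on at most $2K$ coordinates, so restricted strong concavity with parameter $u_{2K}$ applies — and using the first-order (KKT) optimality of $\bgamma_S$ on $\{\bgamma\ge\bzero:\Supp(\bgamma)\subseteq S\}$ to discard the in-support terms of $\langle\nabla\calU(\bgamma_S),\bgamma_{S\cup S^*}-\bgamma_S\rangle$, followed by a one-dimensional maximization over the residual coordinates, yields $F(S^*)-F(S)\le F(S\cup S^*)-F(S)\le \frac{1}{2u_{2K}}\sum_{u\in S^*\setminus S} w^{(S)}_u$, where the first inequality is monotonicity of $F$. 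These are exactly the analogues of the $\alpha$-weak-submodularity inequalities of \citet{pmlr-v80-chen18b}, with the loss $1/(2\tilde U_1)$ on one side and $1/(2u_{2K})$ on the other, whose ratio is $\beta$.

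For step (ii), I would first check that Step~1 of the algorithm — picking, per part $P_j$, the top-$(K_2-|S\cap P_j|)$ thresholded gradient coordinates — returns a maximum-$w$-weight base of the contracted partition matroid $\calM_2/S$, since for a partition matroid the max-weight base decomposes across parts. Then, at iteration $i$, I would couple the selected base $B_i$ with $S^*$ via the matroid exchange property in $\calM_2/S_{i-1}$: along an injection of a maximal $(\calM_2/S_{i-1})$-independent subset of $S^*\setminus S_{i-1}$ into $B_i$, maximality of the $w$-weight of $B_i$ forces the $B_i$-coordinates to carry at least as much $w$-weight, so $\sum_{u\in B_i} w^{(S_{i-1})}_u$ dominates the matching $w$-weight charged to $S^*$. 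Combining with the two bounds from step (i) gives a per-iteration progress inequality of the form $\mathbb{E}\big[F(S_i)-F(S_{i-1})\mid S_{i-1}\big]\ge \tfrac{c\,\beta}{K-i+1}\,\big(F(S^*)-F(S_{i-1})\big)$ up to the usual second-order correction from drawing one element uniformly from $B_i$; telescoping this recursion over the $K$ iterations exactly as in \citet{pmlr-v80-chen18b}, with expectations taken over all draws, produces the claimed factor $(1+1/\beta)^{-2}=(1+\tilde U_1/u_{2K})^{-2}$ (the exponent $2$ being inherited from their matroid analysis, into which $\beta=u_{2K}/\tilde U_1$ enters in place of $\alpha$).

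The main obstacle is the combinatorial heart of step (ii): the matroid-exchange coupling, and in particular reconciling the set $S^*\setminus S_{i-1}$ over which the RSC upper bound is stated with its maximal independent subset in the contracted matroid $\calM_2/S_{i-1}$ over which the exchange argument naturally operates, while carrying the expectation over the random single-element draws through the telescoping without degrading the $\beta^2$ (rather than $\beta$) dependence — this is where the bookkeeping of \citet{pmlr-v80-chen18b} must be followed faithfully. The remaining points are routine: handling the non-negativity constraint via the thresholded surrogate, verifying that every transport plan appearing in the analysis has support of size at most $2K$ so that $u_{2K}$ is the correct RSC parameter, and invoking monotonicity and non-negativity of $F$ (Lemma~\ref{lemma:subm}) where needed.
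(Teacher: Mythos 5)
Your proposal follows essentially the same route as the paper's proof: it establishes the identical RSC/RSM sandwich (the paper's auxiliary lemmas giving $F(u\mid S)\ge \frac{1}{2\tilde U_1}\left(\bg^+_u(\bgamma_S)\right)^2$ and $F(A\mid S)\le \frac{1}{2u_{2K}}\sum_{u\in A}\left(\bg^+_u(\bgamma_S)\right)^2$) and then re-runs the residual random greedy matroid analysis of \citet{pmlr-v80-chen18b} with the submodularity ratio replaced by $u_{2K}/\tilde U_1$. The one point where the paper is simpler than your sketch is the combinatorial coupling you flag as the main obstacle: instead of injecting an independent subset of $S^*\setminus S_{i-1}$ into the selected base, the paper compares $M_i$ directly against the shrinking random comparator $OPT_{i-1}$ of \citet{pmlr-v80-chen18b}, which is by construction a base of $\calM_2/S_{i-1}$ and hence a feasible candidate in step~1, and then invokes their Lemmas~2.2--2.3, Corollary~2.5 and Theorem~2.6 for the telescoping — exactly the bookkeeping you defer to.
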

Appendix \ref{app-lemma:struc-dash} discusses the proof for Lemma~\ref{lemma:struc-dash}.
\begin{algorithm}[t]
\caption{OMP algorithm for maximizing weakly submodular problems with matroid constraint} \label{alg:colsparseOT_dash}
\begin{algorithmic}
\STATE \textbf{Input:}  $\lambda_1, \lambda_2, \bmu, \bnu, \bC, \bG_1, \bG_2$, per column sparsity level $K_2$.
\STATE $S_0=\emptyset, \bgamma_{S_0} = \bzero, K=nK_2,\mathbf{g} = -\nabla \calU(\bgamma_{S_0})$.
\FOR{$i=1,\cdots,K$}
\STATE \textbf{1.} Let $M_i$ be a maximal independent set of $\calM_2/S_{i-1}$ maximizing the sum $\sum_{u\in M_i}\max(0,\bg_u)$.
\STATE \textbf{2.} Let $u$ be a uniformly random element from $M_i$.
\STATE \textbf{3.} $S_i = S_{i-1}\cup \{u\}$
\STATE \textbf{4.} $\bgamma_{S_i} = \argmin_{\bgamma:\Supp(\bgamma)\in S_i,\bgamma\ge \bzero}{\calU(\bgamma)}$ 
\STATE \textbf{5.} $\mathbf{g} = -\nabla \calU(\bgamma_{S_{i}})$
\ENDFOR
\STATE \textbf{return} $S_{K}, \bgamma_{S_{K}}$
\end{algorithmic}
\end{algorithm}

\subsection{Gradient Computation \& Computational Cost}
\textbf{Gradient computation:} The gradient $\nabla \calU(\bgamma)$ is employed in steps 4 and 5 of both the proposed Algorithms~\ref{alg:stoch-dash}~\&~\ref{alg:colsparseOT_dash}. 
The partial gradient expression is as follows:
\begin{equation}\label{eqn:gradient}
\begin{array}{l}
\frac{\partial \calU(\bgamma)}{\partial \bgamma_{ij}} = \bC_{ij} + 2\lambda_1 \left((\bG_1)_i^\top (\bgamma \bone) +  (\bone^\top\bgamma)(\bG_2)_j\right) \\
\qquad\qquad- 2\lambda_1 \left((\bG_1)_i^\top \bmu + \bnu^\top (\bG_2)_j\right) + \lambda_2 \bgamma_{ij}.
\end{array}
\end{equation}
In (\ref{eqn:gradient}), we observe that (a) the last term $(\bG_1)_i^\top \bmu + \bnu^\top (\bG_2)_j$ is independent of $\bgamma$ and can be precomputed, and 
(b) the terms involving the full matrix $\bgamma$ decouple in $i$ and $j$. 
We leverage this structure for computing $\nabla \calU(\bgamma)$ efficiently. 

\textbf{Computational cost:} We now discuss the per-iteration computational cost of both the proposed algorithms. For a given support set $S$, both Algorithms~\ref{alg:stoch-dash}~\&~\ref{alg:colsparseOT_dash} involve solving the corresponding MMD-UOT problem to obtain the solution $\bgamma_S$. Let $R\subseteq V\setminus S$ be the set on which the gradient needs to be computed. The set $S$ is updated via greedy selection (step~2 in Algorithm~\ref{alg:stoch-dash} or steps~1\,\&\,2 in Algorithm~\ref{alg:colsparseOT_dash}) as $S\leftarrow S\cup \{u\}$, where $u\in R$ is the chosen element in the current iteration. The per-iteration cost of both the algorithms is $O(N+t\cdot M)$, where $N$ is the cost of computing the gradient of candidate elements, $t$ is the maximum iterations used for solving MMD-UOT using APGD, and $M$ is the gradient cost in every APGD iteration. The above expression does not include the one-time cost of computing matrices $\bC,\bG_1,\bG_2$ and vectors $\bG_1\bmu,\bG_2\bnu$. 

Let $I_S=\{i\in [m]:(i,j)\in S\}$, $J_S=\{j\in [n]:(i,j)\in S\}$, $I_R=\{i\in [m]:(i,j)\in R\}$, and $J_R=\{j\in [n]:(i,j)\in R\}$. 
Then, $M=\calO(|I_S|^2 + |J_S|^2 + |S|)$ and $N=\calO(|I_S||I_R| + |J_S||J_R| + |S| + |R|)$. For both the algorithms, $1\leq|I_S|,|I_R|\leq m$ and $1\leq|J_S|,|J_R|\leq n$, where the  value of these terms depend on $S$ and $R$. For Algorithm~\ref{alg:stoch-dash}, $|R|=mnK^{-1}\log(1/\epsilon)$ and for Algorithm~\ref{alg:colsparseOT_dash} with partition matroid constraint, $2\leq |R|\leq mn$.

\subsection{Dual Analysis of (\ref{eqn:sparse-uotmmd}) and (\ref{eqn:reform})}\label{subs:DualityGap}
In the previous sections, we analyzed (\ref{eqn:sparse-uotmmd}) with $\calC=\calC_1$ or $\calC=\calC_2$ using discrete submodular maximization framework, developed Algorithms~\ref{alg:stoch-dash}~\&~\ref{alg:colsparseOT_dash}, and obtained corresponding approximation guarantees (Lemma~\ref{lemma:stoch-dash} and Lemma~\ref{lemma:struc-dash}). However, (\ref{eqn:sparse-uotmmd}) may also be viewed in a continuous optimization setting. It has a convex objective but a non-convex and non-smooth constraint set. 
From this perspective, we now analyze a dual of the non-convex (\ref{eqn:sparse-uotmmd}). While only weak duality holds in our setting, the duality gap analysis may still provide insights on the closeness to optimality. 

Our next result details the primal-dual formulations corresponding to the proposed structured sparse optimal transport problem (\ref{eqn:sparse-uotmmd}) with $\calC=\calC_2$. The expressions for (\ref{eqn:sparse-uotmmd}) with $\calC=\calC_1$ can be derived likewise.
\begin{lemma}\label{lemma:dual}
    Problem (\ref{eqn:sparse-uotmmd}) with $\calC=\calC_2$ and $\lambda_2>0$ may equivalently be written as: 
    \begin{equation}\label{eqn:primal}
    \begin{array}{l}
        \min\limits_{\bgamma\geq \bzero} P(\bgamma)\big(\coloneqq \langle\mathbf{C},\bgamma\rangle + \sum_{j=1}^n \Theta(\bgamma_j)\\
        \qquad\qquad + \lambda_1(\|\bgamma\bone-\bmu\|_{\bG_1}^2+\|\bgamma^\top\bone-\bnu\|_{\bG_2}^2)\big),
    \end{array}
    \end{equation}
    where $\bgamma_j$ denotes the $j^\textup{th}$ column of $\bgamma$, $\Theta(\bgamma_j)=\frac{\lambda_2}{2}\|\bgamma_j\|^2 + \delta_{B_K}(\bgamma_j)$ and $B_K = \{\bz\in \R_{+}^m:\|\bz\|_0\leq K\}$. Here, $\delta_{B}$ is the indicator function of a set $B$ such that $\delta_{B}(\bz)=0$ if $\bz\in B$, and $\delta_{B}(\bz)=\infty$ otherwise. The following is a convex (weak) dual of the primal (\ref{eqn:primal}):
    \begin{equation}\label{eqn:dual}
    \begin{array}{l}
        \max\limits_{\balpha\in\R^m,\bbeta\in\R^n} D(\balpha,\bbeta) \big(\coloneqq\langle\balpha,\bmu\rangle + \langle\bbeta,\bnu\rangle - \frac{1}{4\lambda_1}\balpha^\top \bG_{1}^{-1}\balpha\\
        \qquad -\frac{1}{4\lambda_1} \bbeta^\top \bG_{2}^{-1}\bbeta -\sum_{j=1}^n \Theta^*(\balpha+\bbeta_j\bone - \bC_j)\big),
    \end{array}
    \end{equation}
    where $\bC_j$ denote the $j^\textup{th}$ column of $\bC$ and
    \begin{equation}\label{eqn:conjugate}
    \Theta^{*}(\bw) = \max_{\bz\in B_K} \langle\bw,\bz\rangle - \frac{\lambda_2}{2}\|\bz\|^2 .
    \end{equation}
\end{lemma}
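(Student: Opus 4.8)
The plan is a textbook Lagrangian-duality derivation; the only points that are not purely mechanical are that the inner minimization over $\bgamma$ decouples column-by-column and that, since $B_K$ is non-convex, we can assert only weak duality. First I would establish the reformulation (\ref{eqn:primal}). Writing $\bgamma=[\bgamma_1,\dots,\bgamma_n]$ columnwise, note that $\|\bgamma\|^2=\sum_{j=1}^n\|\bgamma_j\|^2$ and that the constraint $\bgamma\in\calC_2$ is exactly ``$\bgamma_j\in B_K$ for every $j$'' (with $K=K_2$). Hence $\tfrac{\lambda_2}{2}\|\bgamma\|^2$ together with the sparsity constraint equals $\sum_j\Theta(\bgamma_j)$, and substituting this into $\calU$ from (\ref{eqn:uotmmd}) turns (\ref{eqn:sparse-uotmmd}) into the primal $P(\bgamma)$ of (\ref{eqn:primal}) (the explicit $\bgamma\geq\bzero$ written there is subsumed by $\delta_{B_K}$ and kept only for emphasis).

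Next I would introduce splitting variables $\bp=\bgamma\bone\in\R^m$ and $\bq=\bgamma^\top\bone\in\R^n$, i.e. the row and column sums, and form the Lagrangian with multipliers $\balpha\in\R^m$ on $\bp-\bgamma\bone=\bzero$ and $\bbeta\in\R^n$ on $\bq-\bgamma^\top\bone=\bzero$:
\begin{align*}
L(\bgamma,\bp,\bq,\balpha,\bbeta) ={}& \langle\bC,\bgamma\rangle + \textstyle\sum_{j}\Theta(\bgamma_j) \\
&+ \lambda_1\|\bp-\bmu\|_{\bG_1}^2 + \lambda_1\|\bq-\bnu\|_{\bG_2}^2 \\
&+ \langle\balpha,\bp-\bgamma\bone\rangle + \langle\bbeta,\bq-\bgamma^\top\bone\rangle.
\end{align*}
Minimizing the strictly convex quadratic in $\bp$ (using that the universal kernel gives $\bG_1\succ0$, hence invertible) and completing the square yields $\langle\balpha,\bmu\rangle-\tfrac{1}{4\lambda_1}\balpha^\top\bG_1^{-1}\balpha$, and identically the $\bq$-minimization yields $\langle\bbeta,\bnu\rangle-\tfrac{1}{4\lambda_1}\bbeta^\top\bG_2^{-1}\bbeta$.

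For the $\bgamma$-part, using $\langle\balpha,\bgamma\bone\rangle=\sum_j\langle\balpha,\bgamma_j\rangle$, $\langle\bbeta,\bgamma^\top\bone\rangle=\sum_j\bbeta_j\langle\bone,\bgamma_j\rangle$ and $\langle\bC,\bgamma\rangle=\sum_j\langle\bC_j,\bgamma_j\rangle$, the $\bgamma$-dependence of $L$ splits as $\sum_j\big(\langle\bC_j-\balpha-\bbeta_j\bone,\bgamma_j\rangle+\Theta(\bgamma_j)\big)$; thus the minimization decouples and each column contributes $-\sup_{\bz\in B_K}\big(\langle\balpha+\bbeta_j\bone-\bC_j,\bz\rangle-\tfrac{\lambda_2}{2}\|\bz\|^2\big)=-\Theta^*(\balpha+\bbeta_j\bone-\bC_j)$, which is exactly the definition (\ref{eqn:conjugate}). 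Collecting the three pieces gives precisely $D(\balpha,\bbeta)$ of (\ref{eqn:dual}). Weak duality $D(\balpha,\bbeta)\le\min_{\bgamma\in\calC_2}P(\bgamma)$ then follows from $D(\balpha,\bbeta)=\min_{\bgamma,\bp,\bq}L\le L(\bgamma^*,\bgamma^*\bone,(\bgamma^*)^\top\bone,\balpha,\bbeta)=P(\bgamma^*)$ at a primal optimum $\bgamma^*$, since the coupling terms vanish at primal-feasible points. Concavity of $D$ (so that (\ref{eqn:dual}) is a genuine concave program) is immediate: $\balpha\mapsto\balpha+\bbeta_j\bone-\bC_j$ is affine and $\Theta^*$, a pointwise supremum of affine functions of its argument, is convex, while $-\tfrac{1}{4\lambda_1}\balpha^\top\bG_1^{-1}\balpha$ and its $\bbeta$-analogue are concave because $\bG_1^{-1},\bG_2^{-1}\succ0$.

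The step I expect to need the most care is the treatment of $\Theta^*$. Because $B_K$ is a finite union of coordinate faces of $\R^m_+$, $\Theta$ is non-convex, the column subproblems are non-convex, and therefore only weak --- not strong --- duality is available, consistent with the statement in the text. One must also check that $\Theta^*(\bw)$ is finite with the supremum attained: this is exactly where $\lambda_2>0$ is used, since $\bz\mapsto\langle\bw,\bz\rangle-\tfrac{\lambda_2}{2}\|\bz\|^2$ is coercive and $B_K$ is closed, so the maximum exists and $\Theta^*$ is a well-defined finite convex function, making the dual well-posed. The remaining ingredients --- invertibility of $\bG_1,\bG_2$ under a universal kernel, completing the square, and the columnwise separation of the Lagrangian --- are routine.
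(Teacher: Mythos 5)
Your derivation is correct and follows essentially the same route as the paper's proof: introduce auxiliary variables for the marginals, form the Lagrangian with multipliers $\balpha,\bbeta$, minimize the quadratic part in closed form (using invertibility of $\bG_1,\bG_2$) to get the $\langle\balpha,\bmu\rangle-\tfrac{1}{4\lambda_1}\balpha^\top\bG_1^{-1}\balpha$ terms, and let the $\bgamma$-minimization decouple columnwise into $-\Theta^*(\balpha+\bbeta_j\bone-\bC_j)$. Your additional remarks on concavity of $D$, finiteness of $\Theta^*$ via $\lambda_2>0$, and the explicit weak-duality inequality are sound elaborations of points the paper leaves implicit.
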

The above result can be obtained using Lagrangian duality, and $\balpha$ and $\bbeta$ are the Lagrangian parameters corresponding to $\bgamma\bone-\bmu =\bp$ and $\bgamma^\top\bone-\bnu =\bq$ constraints, respectively, where $\bp$ and $\bq$ are auxiliary variables. To compute $\Theta^*(\bw)$, consider the permutation $\pi$ on $[m]$ such that $\bw_{\pi(i)} \geq \bw_{\pi(i+1)}$ for $1\leq i < m$. The solution is given by: $\bz_{\pi(i)} = \max\left(0, \frac{\bw_{\pi(i)}}{\lambda_2} \right)$ for $i \in [K]$, $0$ otherwise, and $\Theta^{*}(\bw)=\frac{1}{2\lambda_2}\sum_{i=1}^K (\max(0,\bw_{\pi(i)}))^2$ \citep{liu2023sparsityconstrained}. 

For a feasible primal-dual pair $\{\bgamma_S,(\balpha_S,\bbeta_S)\}$ corresponding to (\ref{eqn:primal}) and (\ref{eqn:dual}), 
$\Delta(\bgamma_S,\balpha_S,\bbeta_S) = P(\bgamma_S)-D(\balpha_S,\bbeta_S)$ is the associated duality gap. However, $\Delta(\bgamma_S,\balpha_S,\bbeta_S)$ requires computing the dual candidate $(\balpha_S,\bbeta_S)$ for the given primal candidate $\{\bgamma_S\}$, which leads to our next result. 
%
\begin{proposition}\label{prop:dualitygap}
    Let $\bgamma_S$ be a feasible primal candidate for (\ref{eqn:primal}), e.g., obtained from Algorithm~\ref{alg:colsparseOT_dash} as (\ref{eqn:primal}) and (\ref{eqn:colsparse}) are equivalent problems. Then, the dual candidate corresponding to $\bgamma_S$ is $\balpha_S = 2\lambda_1\bG_1(\bmu-\bgamma\bone)$ and $\bbeta_S = 2\lambda_1\bG_2(\bnu-\bgamma^\top\bone)$. 
\end{proposition}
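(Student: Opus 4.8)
The plan is to retrace the Lagrangian-duality argument that underlies Lemma~\ref{lemma:dual} and simply read off the correspondence between a primal point and its associated dual point from the first-order stationarity conditions in the auxiliary variables. Recall that in deriving (\ref{eqn:dual}) one splits off $\bp$ and $\bq$ through the equality constraints $\bp = \bgamma\bone-\bmu$ and $\bq = \bgamma^\top\bone-\bnu$, so that the two MMD terms in (\ref{eqn:primal}) read $\lambda_1\bp^\top\bG_1\bp + \lambda_1\bq^\top\bG_2\bq$, and attaches multipliers $\balpha\in\R^m$ and $\bbeta\in\R^n$ to these two constraints. The resulting Lagrangian, after collecting the $\bgamma$-dependent terms (which produce the $-\sum_j\Theta^*(\balpha+\bbeta_j\bone-\bC_j)$ term of (\ref{eqn:dual})), leaves the separable quadratic pieces
\[
\min_{\bp}\ \lambda_1\bp^\top\bG_1\bp + \langle\balpha,\bp\rangle
\qquad\text{and}\qquad
\min_{\bq}\ \lambda_1\bq^\top\bG_2\bq + \langle\bbeta,\bq\rangle .
\]

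First I would note that $\bG_1$ and $\bG_2$, being Gram matrices of a universal (hence characteristic, strictly positive definite) kernel evaluated at distinct points, are positive definite and therefore invertible; each inner minimization above is thus a strictly convex quadratic with the stationarity condition $2\lambda_1\bG_1\bp + \balpha = \bzero$, i.e. $\bp^\star = -\tfrac{1}{2\lambda_1}\bG_1^{-1}\balpha$ (and likewise $\bq^\star = -\tfrac{1}{2\lambda_1}\bG_2^{-1}\bbeta$), whose optimal value $-\tfrac{1}{4\lambda_1}\balpha^\top\bG_1^{-1}\balpha$ is precisely the term appearing in (\ref{eqn:dual}), confirming the reconstruction is consistent. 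The dual candidate associated with a primal candidate $\bgamma_S$ is then, by definition, the multiplier pair for which this stationarity relation is compatible with the primal-feasible value of the auxiliary variables, namely $\bp = \bgamma_S\bone-\bmu$ and $\bq = \bgamma_S^\top\bone-\bnu$. Substituting gives $\bgamma_S\bone-\bmu = -\tfrac{1}{2\lambda_1}\bG_1^{-1}\balpha_S$ and $\bgamma_S^\top\bone-\bnu = -\tfrac{1}{2\lambda_1}\bG_2^{-1}\bbeta_S$, which rearrange to $\balpha_S = 2\lambda_1\bG_1(\bmu-\bgamma_S\bone)$ and $\bbeta_S = 2\lambda_1\bG_2(\bnu-\bgamma_S^\top\bone)$, exactly as claimed. (At a primal optimum these are the KKT relations, so the choice is the principled one for which strong duality, were it to hold, would close the gap.)

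Finally I would observe that this pair is admissible for the dual problem for free: (\ref{eqn:dual}) is an unconstrained maximization over $(\balpha,\bbeta)\in\R^m\times\R^n$, so $(\balpha_S,\bbeta_S)$ is dual-feasible, $\{\bgamma_S,(\balpha_S,\bbeta_S)\}$ is a valid feasible primal--dual pair, and by weak duality $\Delta(\bgamma_S,\balpha_S,\bbeta_S) = P(\bgamma_S)-D(\balpha_S,\bbeta_S)$ is nonnegative and upper-bounds the primal suboptimality of $\bgamma_S$. The only points requiring care are the sign bookkeeping in writing the Lagrangian and the invocation of positive-definiteness of $\bG_1,\bG_2$ to justify the inverses; beyond that there is no real obstacle, since the statement is essentially nothing more than the stationarity condition of the Lagrangian with respect to the auxiliary variables $\bp$ and $\bq$ combined with primal feasibility.
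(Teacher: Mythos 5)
Your proof is correct and follows essentially the same route as the paper: the paper also derives the dual candidate from the Lagrangian stationarity conditions $2\lambda_1\bG_1\bp+\balpha=\bzero$ and $2\lambda_1\bG_2\bq+\bbeta=\bzero$ in the auxiliary variables, then substitutes $\bp=\bgamma\bone-\bmu$ and $\bq=\bgamma^\top\bone-\bnu$ to obtain $\balpha_S$ and $\bbeta_S$. Your extra remarks on the invertibility of the Gram matrices and the unconstrained (hence automatically feasible) dual are consistent with, and slightly more explicit than, the paper's argument.
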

Proposition~\ref{prop:dualitygap} provides concrete expressions for computing the duality gap $\Delta$. While weak duality only guarantees $\Delta\geq 0$, computing $\Delta$ may still provide an estimate of how far a candidate solution could be from optimality. 
We present the proofs of Lemma~\ref{lemma:dual} and Proposition~\ref{prop:dualitygap} in Appendix~\ref{app:dual}.

\section{Related Works}\label{subsec:related}
Since entropic-regularized OT \citep{sinkhorn13} usually learns dense transport plan, \citet{Blondel2018} proposed an alternative $\ell_2$-norm regularization for balanced OT and showed that it learns a sparse transport plan. While the degree of sparsity in $\ell_2$-norm regularized OT depends on the magnitude of the regularization parameter, it cannot be explicitly controlled as desired in several applications. 
Hence, \citet{liu2023sparsityconstrained} impose explicit column-wise sparsity constraints on the transport plan in the balanced $\ell_2$-regularized OT problem. 
To solve their $\ell_2$-regularized sparsity constrained balanced OT problem, henceforth termed as SCOT, \citet{liu2023sparsityconstrained} propose a (semi-)dual relaxation of their primal formulation in the continuous optimization setting. SCOT uses gradient updates (LBFGS or ADAM solver) to solve the (semi-)dual and requires solving (\ref{eqn:conjugate}) at each iteration. 
We note that \citet{strucOT} also leverages submodularity in the OT framework. In particular, they employ a submodular cost function. 

In contrast, we propose to learn a general or column-wise sparse transport plan in the unbalanced optimal transport (UOT) setting. 
We pose these as equivalent (weakly) submodular maximization problems under matroid (uniform or partition) constraints. Overall, we develop efficient discrete greedy algorithms to solve the primal formulation (\ref{eqn:reform}) and present corresponding approximation guarantees (Lemmas~\ref{lemma:stoch-dash}~\&~\ref{lemma:struc-dash}). The equivalence between the discrete (\ref{eqn:reform}) and the continuous (\ref{eqn:sparse-uotmmd}) problems allows us to derive a convex (weak) dual (\ref{eqn:dual}) of (\ref{eqn:reform}). 
While this dual analysis requires $\lambda_2>0$, Algorithms~\ref{alg:stoch-dash}~\&~\ref{alg:colsparseOT_dash} (and Lemmas~\ref{lemma:stoch-dash}~\&~\ref{lemma:struc-dash}) also work with $\lambda_2=0$, i.e., no additional $\ell_2$-norm regularization in (\ref{eqn:sparse-uotmmd}). On the other hand, the presence of $\ell_2$-norm regularizer is essential for SCOT \citep{liu2023sparsityconstrained}. 


\section{Experimental Results}
We evaluate the proposed approach in various applications. Experiments related to general sparse transport plans are discussed in Section~\ref{gensparseexp}, while those related to column-wise sparse transport plans are discussed in Section~\ref{colsparseexp}. 
Additional experimental results and details are presented in Appendix \ref{app:exp}. Code can be downloaded from \href{https://github.com/Piyushi-0/Sparse-UOT}{https://github.com/Piyushi-0/Sparse-UOT}.


\subsection{General Sparsity}\label{gensparseexp}
We begin with experiments where learning a sparse transport plan is desired.  

\subsubsection{Designing Topology}\label{exp-top}
Sparse process flexibility design (SPFD) aims to design a network topology that handles unpredictable demands of $n$ products by matching them to the supplies from  $m$ plants. Designing a network topology requires adding edges between the nodes that facilitate the flow of goods. A recent work \citep{ijcai2023p679} models SPFD as an OT problem. While the supplies are predefined and can be modeled as $\bmu\in [0, \infty)^m$, the demands follow a given distribution $\nu$. Hence, a set of demands $\{\bnu_i\}_{i=1}^z$ can be sampled from $\nu$, i.e., $\bnu_i\in [0, \infty)^n \sim \nu$. 
Then, the SPFD problem may be defined as \citep{ijcai2023p679}
\begin{equation}\label{eqn:spfd}
\max\limits_{ \{ \bgamma_i\in \Gamma(\bmu, \bnu_i)\}_{i=1}^z}\frac{1}{z}\sum_{i=1}^z\langle\mathbf{P}, \bgamma_i\rangle,
\ \textup{s.t. }\left\|\sum_{i=1}^z \bgamma_i\right\|_0\leq l,
\end{equation}
where $\mathbf{P}\in\R^{m\times n}$ denotes the matrix of profits (negative of the cost matrix in the OT setting) and $l$ is the total number of edges allowed in the network. 

\textbf{GSOT:} \citet{ijcai2023p679} propose a convex relaxation of the $\ell_0$-norm constraint in (\ref{eqn:spfd}) with a $\ell_1$-norm regularizer and solve the resulting OT problem, termed as group sparse OT (GSOT), using an ADMM algorithm. 
Given a solution $\{\bgamma_{i,\rm{GSOT}}\}_{i=1}^z$ of the relaxed GSOT problem, the network topology may be obtained from the aggregate solution $\bgamma_{\rm{GSOT}}=\frac{1}{z}\sum_{i=1}^z\bgamma_{i,\rm{GSOT}}$. 
The profit created by the network is approximated as $\langle\mathbf{P},\bgamma_{\rm{GSOT}}\rangle$. 
It should be noted that since the aggregate  $\bgamma_{\rm{GSOT}}$ may not satisfy $\|\bgamma_{\rm{GSOT}}\|_0\leq l$, the top-$l$ edges in $\bgamma_{\rm{GSOT}}$ which maximize the profit $\langle\mathbf{P},\bgamma_{\rm{GSOT}}\rangle$ are selected as the network topology.

\textbf{Proposed:} We propose to model the SPFD problem~(\ref{eqn:spfd}) as the following UOT problem:  
\begin{equation}\label{eqn:proposed_spfd}
\frac{1}{z}\sum_{i=1}^z\ \max\limits_{\bgamma_i\in \Gamma(\bmu, \bnu_i)}\langle\mathbf{P}, \bgamma_i\rangle,\ \textup{s.t. }\|\bgamma_i\|_0\leq \frac{l}{z},
\end{equation}
where we solve $z$ independent instances of our GenSparseUOT problem (\ref{eqn:gensparse}). Thus, we employ the proposed Algorithm~\ref{alg:stoch-dash} to solve (\ref{eqn:proposed_spfd}). Let $\{\bgamma^*_i\}_{i=1}^z$ be the obtained solution. 
The final network topology is obtained by selecting the top-$l$ significant edges of $\mathbf{P}\odot \sum_i \bgamma^*_i$, as discussed in the case of GSOT. 

\textbf{Experimental setup and results:} Using the data generation process described in \citet{ijcai2023p679}, we generate the  source and target datasets with $m=n=100$ and $z=20$. 
We compare the proposed GenSparseUOT approach only against GSOT, as other baselines such as SSOT \cite{Blondel2018} and MMD-UOT \citep{mmd-uot} do not incorporate sparsity constraint over transport plan.  
The hyperparameters of both GSOT and the proposed approach are tuned. Please refer to Appendix~\ref{app:spfd} for more details. 
In Table~\ref{table-top}, we report the expected profit (averaged across five random trials) obtained by both the approaches with $l=\{100,175,250\}$, i.e., $\max\{m, n\}\leq l \leq \rm{round}(2.5\max\{m, n\})$ \citep{ijcai2023p679}. 
Our solution is obtained via Algorithm~\ref{alg:stoch-dash} and we report our performance with different stochastic greedy parameter $\epsilon$. 
We observe that the proposed approach is significantly better than GSOT across varying network size constraints. 

\begin{table}[t]
\caption{Expected profit (higher is better) for SPFD experiment with varying network size constraint $l$. Proposed refers to our GenSparseUOT formulation (\ref{eqn:gensparse}) solved via Algorithm~\ref{alg:stoch-dash}. The result is averaged over five random trials. We observe that our approach outperforms the GSOT baseline.}
\label{table-top}
\centering
\setlength{\tabcolsep}{4pt}
\begin{tabular}{lccc}
\toprule
Method & $l=100$ & $l=175$ & $l=250$\\
\midrule
    GSOT & {0.014} & {0.031} & {0.044} \\
    Proposed ($\epsilon= 10^{-2}$) & {$0.166$} & {$ 0.224$} & {$ \mathbf{0.293}$}\\
    Proposed ($\epsilon= 10^{-3}$) & {$\mathbf{0.167}$} & {$ 0.238$} & {$0.286$}\\
    Proposed ($\epsilon= 10^{-4}$) & {$0.147$} & {$\mathbf{0.240}$} & {$0.274$}\\
\bottomrule
\end{tabular}
\end{table}

\subsubsection{Monolingual Word Alignment}\label{exp:word-alignment}
Aligning words in a (monolingual) sentence pair is an important sub-task in various natural language processing applications such as question answering, paraphrase identification, sentence fusion, and textual entailment recognition to name a few \citep{maccartney08,yao13,feldman19,brook21}. Recently, \citet{arase-etal-2023-unbalanced} employed the OT machinery to align words between given two sentences. The sentences are represented as a histogram over words and the OT cost matrix is computed using contextualized word embeddings using a (pretrained) BERT-base-uncased model \citep{devlin19}. The learned OT plan represents the alignment. \citet{arase-etal-2023-unbalanced} showed that existing OT variants 
perform at par with tailor-made word alignment techniques \citep{JaliliSabet2020SimAlignHQ}. 



\begin{figure}[t]
\centering{
\includegraphics[scale=0.4]{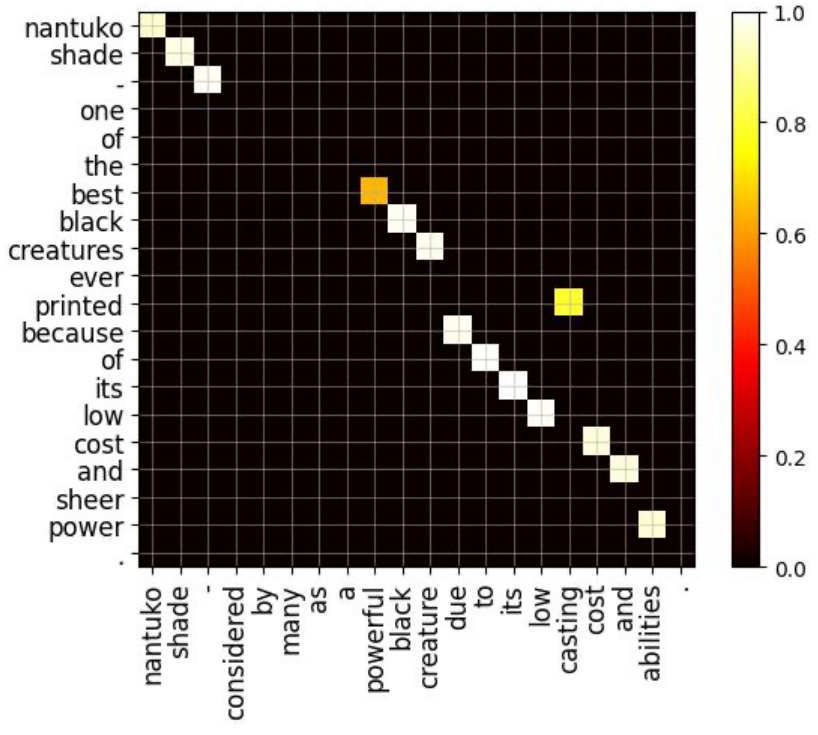}
    \caption{Example of a word alignment matrix obtained by our GenSparseUOT approach. 
    Since the sentences convey similar information, most words in either sentences have a semantic counterpart, and our approach aligns them (almost) correctly. E.g., it correctly aligns 'powerful'$\leftrightarrow$'best' and 'abilities'$\leftrightarrow$'power' and (correctly) does not map 'powerful'$\leftrightarrow$'power' even though this pair is semantically close. Words without a semantic counterpart are left unaligned (null alignment).}
    \label{App-waln}}
\end{figure}

It should be noted that words in one sentence may lack semantic counterparts in the other sentence, especially when the sentences convey different meanings. Such words correspond to \textit{null} alignments. 
Identifying null alignments is essential because it helps us reason about the semantic similarity between sentences by highlighting information inequality. 
This motivates the need of learning sparsity constrained unbalanced transport plan for such a task and we evaluate the suitability of our GenSpareUOT approach (\ref{eqn:gensparse}) for this problem. Figure~\ref{App-waln} illustrates a word alignment matrix learned by our approach for a given pair of (semantically similar) sentences.

\textbf{Experimental setup and results:} We follow the experimental setup described in \citep{arase-etal-2023-unbalanced}. The evaluation is performed on the aligned Wikipedia sentences in an unsupervised setting with the `sure' alignments, i.e., with the alignments agreed upon by multiple annotators \citep{arase-etal-2023-unbalanced}. 
Since the number of words in the input sentences is usually small, we solve GenSpareUOT (\ref{eqn:gensparse}) via Algorithm~\ref{alg:gensparseOT_dash} (which is the non-stochastic  variant of Algorithm~\ref{alg:stoch-dash}) and compare it against the OT baselines BOT, POT, KL-UOT studied by \citet{arase-etal-2023-unbalanced}, SSOT \citep{Blondel2018}, and MMD-UOT \citep{mmd-uot}. 
The hyperparameters of all methods are tuned. Please refer to Appendix~\ref{app:word_alignment} for more details. 

\begin{figure*}
\centering{
\includegraphics[scale=0.25]{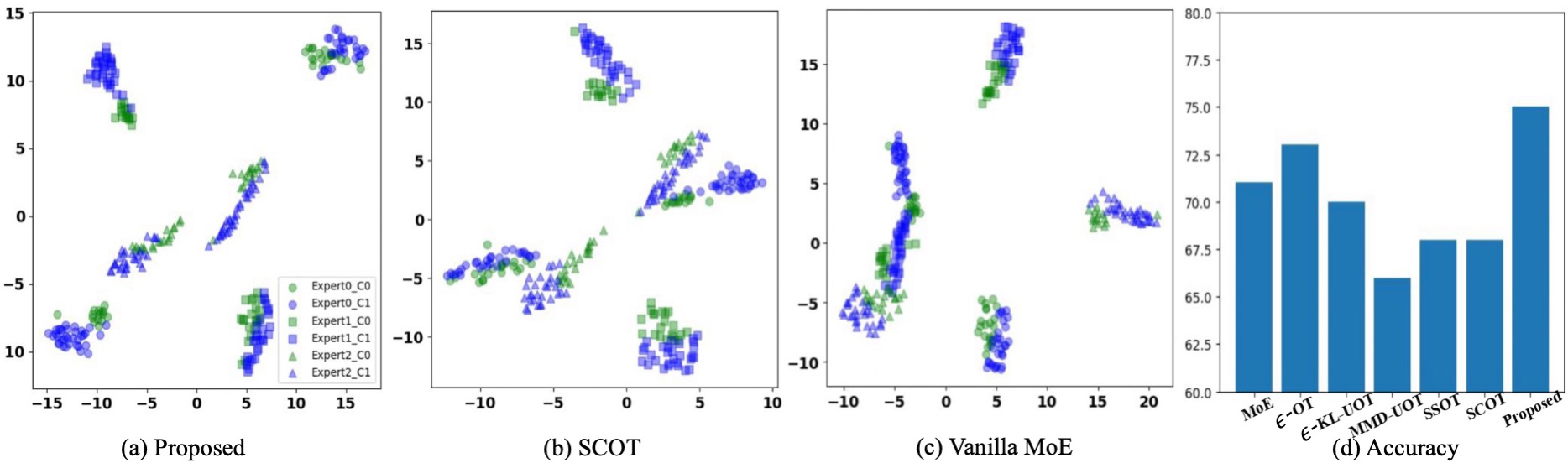}
\caption{(a)-(c) t-SNE mappings of the experts learned by different approaches. `Expert$i\_\textup{C}j$' denotes the embeddings learnt by expert $i$ for samples belonging to class $j$. The embeddings learned with the proposed approach not only distinguish the instances from the two classes but also exhibit more diversity in the knowledge acquired by every expert. (d) The accuracy obtained on the test set.}
\label{tsne-data1-main}
}
\end{figure*}

Table~\ref{table-word} reports the accuracy and the F1 scores corresponding to matching the null and the total (null + non-null) assignments. 
We see that the proposed approach is at par or better than the OT baselines studied by \citet{arase-etal-2023-unbalanced}. On the other hand, our approach outperforms MMD-UOT and the sparse OT approach SSOT. 
The corresponding precision and recall scores are detailed in Table~\ref{table-waln2}. 






\begin{table}
\caption{F1 and accuracy (Acc.) scores on the test split of the Wiki dataset. 
The scores are reported for both null and total alignments. 
Higher scores are better. The proposed approach is at par with the best performing baseline. }
\label{table-word}
\centering
\begin{tabular}{lcccc}
\toprule
 & \multicolumn{2}{c}{Null} & \multicolumn{2}{c}{Total}\\
 Method & Acc. & F1 & Acc. & F1\\
\midrule
    BOT  & {48.95} & \textbf{80.05} & {47.05} & \textbf{94.96}\\
    POT & 37.07 & 72.48 & 34.32 & 94.15\\
    KL-UOT & 44.68 & 78.71 & 42.02 & 94.63\\
    MMD-UOT & 41.35 & 75.92 & 37.74 & 93.14\\
    SSOT & 16.54 & 29.40 & 12.74 & 64.13 \\
    Proposed & \textbf{49.14} & {79.92} & \textbf{48.00} & {94.79}\\
\bottomrule
\end{tabular}
\end{table}
\subsection{Column-wise Sparse Transport Plan}\label{colsparseexp}
Mixture-of-Expert (MoE) \citep{Jacobs1991AdaptiveMO,JordanandJacob,EigenRS13} is a popular architecture that helps scale up model capacity with relatively small computational overhead. 
MoE consists of $m$ experts, which are neural networks with identical architecture, trained with a gating function (often, a shallow neural network) that routes inputs to a chosen subset of the experts. 
\citet{shazeer2017} demonstrated the utility of a sparsely-gated mixture of experts (SMoE) that selects only the top-$K_2$ experts for processing the input, where $1\leq K_2 < m$. A key motivation behind MoE/SMoE is that a complex problem may be solved by a combination of experts, each specializing on different sub-problem(s).

Given an input $\bx$, the output of SMoE is given by $\textup{SMoE}(\bx)=\sum_{r=1}^m\textup{Gate}_r(\bx)E_r(\bx)$, where $\textup{Gate}_r:\R^d\mapsto \R_+$ is the sparse gating function and $\{E_r\}_{r=1}^m$ are the experts. 
\citet{clark2022unified} proposed an entropy-regularized OT based gating function with the aim of achieving a more balanced assignment across experts. For instance, load balancing becomes crucial in distributed systems. 
Recently, \citet{liu2023sparsityconstrained} employed their SCOT method in the SMoE application, where the goal is to map each input in a batch of size $n$ to top-$K_2$ (out of $m$) experts. 
In the following, we illustrate the utility of the proposed ColSparseUOT approach (\ref{eqn:colsparse}), solved via Algorithm~\ref{alg:colsparseOT_dash}, in the SMoE setting. 

\textbf{Toy dataset.} We begin with the classification task on a toy binary dataset (with random train/test split). We train an SMoE with three (shallow) experts and a top-$2$ gating function using various approaches. The architectural and training details are provided in Appendix~\ref{colsp}. 
We first qualitatively assess the latent representations learned by  (vanilla) MoE \citep{shazeer2017}, SCOT \citep{liu2023sparsityconstrained}, and the proposed ColSparseUOT approaches. In Figure~\ref{tsne-data1-main} we show their 2-D t-SNE visualizations \citep{vandermaaten08a} on the toy dataset. 
Figure~\ref{tsne-data1-main}(a) reveals that the proposed approach's experts not only distinguish the two classes effectively but also demonstrate variety in the knowledge acquired by each expert. This is because the t-SNE maps the proposed approach's experts to well-separated locations on the 2-D plane. On the other hand, t-SNE maps the experts learned by SCOT and (vanilla) MoE approaches to overlapping/nearby regions in their respective plots shown in Figures~\ref{tsne-data1-main}(b)~\&~\ref{tsne-data1-main}(c). 
We also compare the performance of the learned SMoE models on the test split. In Figure~\ref{tsne-data1-main}(d), we report the accuracy of our proposed approach, (vanilla) MoE, SCOT, and other SMoE baselines in which the (top-$2$) gating function is based on entropy-regularized OT ($\epsilon$OT), entropy-regularized KL-UOT ($\epsilon$KL-UOT), SSOT, and MMD-UOT. We observe that our method obtains the highest accuracy. 

\textbf{CIFAR dataset.} 
We next focus on the binary classification problem of identifying whether a given image belongs to the CIFAR-10 dataset or the CIFAR-10-rotate dataset \citep{chen2022towards}. 
CIFAR-10-rotate consists of CIFAR-10 images, rotated by $30$ degrees. 
For SMoE, we consider four ResNet18 experts \cite{he2016residual} and train SMoEs with the gating network based on: (a) top-1 linear activation \citep{chen2022towards}, (b) SCOT, and (c) the  proposed ColSparseUOT (\ref{eqn:colsparse}). In both SCOT and ColSparseUOT, a sparse transport plan is learned between the $m=4$ experts and a given batch of $n$ inputs with the goal of mapping each input with only one expert ($K_2=1$). 

In Table~\ref{table-cifar}, we report the performance of all the three approaches. Since load balancing is an important aspect in MoE setup, we also report the corresponding number of inputs assigned to every expert during the inference stage. 
However, balanced allocation may not be achieved by the SMoEs by default. 
Hence, we report their results with different hyperparameters values. Please refer to Appendix~\ref{colsp} for more details on the experimental setup. 
From the results we observe that our approach obtains a good generalization performance with balanced assignments across hyperparameters. While SCOT obtains a reasonable accuracy in one case (with $\lambda=10$), its load balancing is skewed. 
For the non-OT based Top-1 MoE basline, its default setting obtains a heavily skewed allocation with two experts never getting used. In a more balanced configuration, it suffers a small accuracy drop. 
Overall, we see that our method is well suited for SMoE setting from both the generalization and load balancing points of view. 
\begin{table}[t]
\caption{Accuracy obtained on SMoE experiment along with the number of inputs allocated to each expert. We observe that the proposed approach obtains the best  generalization performance with balanced allocation across experts.}
\label{table-cifar}
\centering
\setlength{\tabcolsep}{3pt}
\begin{tabular}{l|c|cccc}
\toprule
Method & Acc. & Exp~1 & Exp~2 & Exp~3 & Exp~4\\
\midrule
    Top-1 MoE$_{\textup{default}}$ & 95.91 & 0 & 10962 & 9038 & 0\\
    Top-1 MoE$_{\textup{balanced}}$ & 93.74 & 4953 & 5018 & 4779 & 5250\\
    \midrule
    SCOT$_{\lambda=0.1}$ & 77.96 & 3341 & 1895 & 11119 & 3645\\
    SCOT$_{\lambda=10}$ & 90.56 & 1929 & 6112 & 6113 & 5846\\
    SCOT$_{\lambda=1000}$ & 56.48 & 0 & 7678 & 7788 & 4534\\
    \midrule
    Proposed$_{\lambda_1=0.1}$ & 95.56 & 5435 & 4854 & 4977 & 4734\\
    Proposed$_{\lambda_1=10}$ & 85.18 & 5000 & 5002 & 4998 & 5000\\
    Proposed$_{\lambda_1=1000}$ & 90.54 & 5000 & 5000 & 5000 & 5000\\
\bottomrule
\end{tabular}
\end{table}


\subsection{Duality Gap Comparison}\label{subsec:dualitygapcomparison}
In this section, we compare the optimization quality of the proposed Algorithm~\ref{alg:colsparseOT_dash} and the SCOT algorithm \citep{liu2023sparsityconstrained} in solving our sparse UOT problem with the column-wise sparsity constraint (\ref{eqn:primal}). In Section~\ref{subsec:partitionmatroid}, we propose to solve (\ref{eqn:primal}) in discrete optimization setting, via an equivalent reformulation (\ref{eqn:reform}). 
It should be noted that \citet{liu2023sparsityconstrained} study a $\ell_2$-regularized \textit{balanced} OT problem with column-wise sparsity constraint. They propose a gradient descent based algorithm (e.g., LBFGS) with sparse projections to optimize a dual relaxation of their primal problem. We use their algorithm to solve (\ref{eqn:dual}), which is a dual of (\ref{eqn:primal}). 
While Algorithm~\ref{alg:colsparseOT_dash} learns a primal solution $\bgamma_1$ of (\ref{eqn:primal}), the corresponding dual solution $\{\balpha_1,\bbeta_1\}$ can be obtained via Proposition~\ref{prop:dualitygap}. 
SCOT, on the other hand, obtains a dual solution $\{\balpha_2,\bbeta_2\}$ of (\ref{eqn:dual}) and then obtains the corresponding primal solution $\bgamma_2$ by solving the sparse projection problem (\ref{eqn:conjugate}). Hence, we can compute and compare the duality gap $\Delta(\bgamma,\balpha,\bbeta) = P(\bgamma)-D(\balpha,\bbeta)$ associated with the solutions obtained by both the algorithms. 

\textbf{Experimental setup and results:} The source and target measures are taken to be the empirical measures over two randomly chosen $100$-sized batches of CIFAR-10. We compare the duality gap over 
a range of hyperparameter $(\lambda_1,\lambda_2)$ values and different kernels employed for the MMD computation in (\ref{eqn:sparse-uotmmd}). The kernel hyperparameters are fixed according to the median heuristics \citep{gretton12a}. 

Table~\ref{table: main-paper-DG-imq-v2} reports the results with a inverse multiquadratic kernel \citep{SriperumbudurFL11}. We observe that our approach outperforms SCOT by obtaining at least three times lower duality gap. In a couple of cases, the duality gap associated with our Algorithm~\ref{alg:colsparseOT_dash} is $< 10^{-10}$, signifying that it has converged at (or very close to) a global optimum. Additional results are discussed in Appendix~\ref{app:dualitygap}. 

\begin{table}[t]
\caption{Duality gap ($\Delta$) comparison for solving (\ref{eqn:primal}) with various hyperparameters. Lower duality gap is better. We observe that our approach obtains significantly lower duality gap than SCOT.}
\label{table: main-paper-DG-imq-v2}
\centering{
\setlength{\tabcolsep}{4pt}
\begin{tabular}{rr|cc|cc}
\toprule
\multirow{2}{*}{$\lambda_1$} & \multirow{2}{*}{$\lambda_2$} & \multicolumn{2}{c|}{Proposed solver} & \multicolumn{2}{c}{SCOT solver}\\
 &  & Primal obj. & $\Delta$ & Primal obj. & $\Delta$ \\ 
\midrule
0.1 & 0.1 & $\mathbf{0.02993}$ & $\mathbf{<10^{-10}}$	& 0.03169	& 0.00232 \\
1 & 0.1 & $\mathbf{0.09183}$	& $\mathbf{0.01911}$	& 0.27172		& 0.19111 \\
10 & 0.1 & $\mathbf{0.11682}$ & $\mathbf{0.64896}$	& 2.30889	& 2.21029 \\
0.1 & 1 & $\mathbf{0.03036}$	& $\mathbf{<10^{-10}}$	& 0.03116	& 0.00114 \\
1 & 1 & $\mathbf{0.09409}$	& $\mathbf{0.00286}$	& 0.10371	&	0.01216 \\
10 & 1 &  $\mathbf{0.11897}$	& $\mathbf{0.05468}$	& 0.32334	& 0.21289\\ 
\bottomrule
\end{tabular}
}
\end{table}

\section{Conclusion}
In this work we proposed sparsity-constrained unbalanced OT formulations and presented an interesting viewpoint of the problem as that of maximization of a weakly submodular function over a uniform or partition matroid. To this end, we propose novel greedy algorithms having attractive approximation guarantees. A duality gap analysis further provides an empirical way of validating the optimality of our greedy solution.
Experiments across different applications shows the efficacy of the proposed approach. At a conceptual level, our work shows a novel connection between OT and submodularity.  
A future work could be to expand on the variants of structured sparsity patterns in the OT plan.

\section*{Acknowledgements}
PM and JSN acknowledge the support of Google PhD Fellowship and Fujitsu Limited (Japan), respectively.

\section*{Impact Statement}
This paper tries to advance the field of optimal transport and its applications to machine learning. There are many potential societal consequences of our work, none of which we feel must be specifically highlighted here.

\bibliography{main}
\bibliographystyle{icml2024}

\newpage
\appendix
\renewcommand{\thesection}{A\arabic{section}}
\renewcommand{\thesubsection}{A\arabic{section}.\arabic{subsection}}
\renewcommand{\thefigure}{\textit{A}\arabic{figure}}
\renewcommand{\thetable}{\textit{A}\arabic{table}}
\renewcommand{\theequation}{A\arabic{equation}}
\renewcommand{\thealgorithm}{A\arabic{algorithm}}

\newtheorem{manualtheoreminner}{Theorem}
\newenvironment{manualtheorem}[1]{%
  \renewcommand\themanualtheoreminner{#1}%
  \manualtheoreminner
}{\endmanualtheoreminner}

\newtheorem{manuallemmainner}{Lemma}
\newenvironment{manuallemma}[1]{%
  \renewcommand\themanuallemmainner{#1}%
  \manuallemmainner
}{\endmanuallemmainner}

\newtheorem{manualcorrinner}{Corollary}
\newenvironment{manualcorr}[1]{%
  \renewcommand\themanualcorrinner{#1}%
  \manualcorrinner
}{\endmanualcorrinner}

\onecolumn
\section{Background}

\subsection{Weak Submodularity}\label{weak-sub}
\citet{DasKempe18} defined the notion of approximate submodularity governed by a submodularity ratio. For a monotone function $F$ the submodularity ratio, w.r.t. a set $S$ and a parameter $K\geq 1$, is defined as follows.
$\alpha_{L, K}(F)=\min\limits_{S\subseteq L, A:|A|\leq K, A\cap S=\phi}\frac{\sum_{u\in A}F(S\cup \u)-F(S)}{F(S\cup A)-F(S)},$ with $0/0:=1$. $F$ is submodular iff $\alpha_{S, K}\geq 1$. If the ratio $\alpha\equiv \frac{\sum_{u\in A}F(S\cup \u)-F(S)}{F(S\cup A)-F(S)}$ is greater than 0 and not necessarily greater than 1, then $F$ is $\alpha$-weakly submodular.

\subsection{Characteristic Kernel, Universal Kernel, and Maximum Mean Discrepancy (MMD)}\label{kernels}

We have the following assumption on the kernel corresponding to the MMD regularization in (\ref{eqn:reform}):
\begin{assumption}\label{assmp:kernel}
    The kernel $k$ corresponding to the MMD regularizations in (\ref{eqn:reform}) is bounded and universal. 
\end{assumption}
In the following, we briefly discuss the above concepts. 

\textbf{Boundedness:}
A kernel $k: \mathcal{X}\times \mathcal{X} \mapsto \R$ is said to be bounded if $k(\bx, \by)<\infty, \forall \bx, \by\in \mathcal{X}$. In the continuous domain, examples of bounded kernels include the RBF (Gaussian) kernel or the IMQ (inverse multiquadratic) kernel. 

\textbf{Kernel mean embeddings:} 
Let $\phi(\cdot)$ and $\calH$ be the canonical feature map and the canonical reproducing kernel Hilbert space (RKHS) corresponding to the kernel $k$. The kernel mean embedding \cite{Muandet_2017} of a random variable $X\sim\mathcal{P}$ is defined as $\psi_{\mathcal{P}}\coloneqq \mathbb{E}_{X\sim\mathcal{P}}[\phi(X)]$. If the kernel $k$ is bounded, then $\psi_X\in \calH$ and is well defined. 

\textbf{Characteristic and universal kernels:} Characteristic kernels \citep{SriperumbudurFL11} are those for which the map $\mathcal{P}\mapsto \psi_{\mathcal{P}}$ is injective (one-to-one). 
A kernel defined over a domain $\calX$ is universal if and only if its RKHS is dense in the set of all continuous functions over $\calX$. All universal kernels are also characteristic kernels (over their respective domains). 
Examples of universal kernels include the Kronecker delta kernel for discrete measures, the Gaussian kernel for continuous measures, the IMQ kernel for continuous measures, etc.

\textbf{Maximum mean discrepancy (MMD):} Given a characteristic kernel $k$, and distributions $\mu$ and $\nu$, the MMD metric between $\mu$ and $\nu$ is defined as \citep{gretton12a}
\begin{equation}
    {\rm MMD}_k(p,q) = \|\psi_{\mu} - \psi_{\nu}\|_{\calH} = \max_{f:\|f\|_\calH\leq 1} \langle f, \psi_{\mu} \rangle_{\calH} - \langle f, \psi_{\nu}  \rangle_{\calH},
\end{equation}
where $\langle\cdot,\cdot\rangle_\calH$ and $\|\cdot\|_\calH$ denote the RKHS inner product and RKHS norm corresponding to the kernel $k$, respectively. 


\section{Proofs on the theoretical results presented in the main paper}\label{app:theory}

\subsection{A few useful properties of $F(S)$ as defined in (\ref{eqn:reform})} 

Let the function $F(S)$ be as defined in (\ref{eqn:reform}), i.e., $F(S) \coloneqq \calU(\bzero)-\min\limits_{\bgamma:\Supp(\bgamma)\subseteq S,\bgamma\geq \bzero} \calU({\bgamma})=\calU(\bzero) - \calU(\bgamma_S)$. Here, $\calU: \R^{m\times n}_{+}\mapsto \R_{+}$ and $\bgamma_S$ denotes an optimal solution of the convex MMD-UOT problem (\ref{eqn:uotmmd}) with a given fixed support $S$. In the following, we first prove that the function $-\calU(\cdot)$ has a finite RSC and RSM parameters (Section~\ref{Prel:rsc}) and then use this property to prove a couple of results corresponding to $F(S)$. 

\begin{lemma}\label{lemma:rsc-rsm}
$-\calU(\cdot)$ has a finite restricted strong concavity (RSC) parameter $(u_\Omega)$ and a finite restricted  smoothness (RSM) parameter $(U_\Omega)$ whenever the employed kernel function $k$ is universal. 
\end{lemma}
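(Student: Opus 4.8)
The plan is to exploit that $\calU$ is a \emph{convex quadratic} in $\vec(\bgamma)$, so that $-\calU$ is a concave quadratic and the RSC/RSM inequalities collapse to spectral bounds on one fixed matrix. The linear term $\langle\bC,\bgamma\rangle$ contributes nothing to the Hessian; $\lambda_1\|\bgamma\bone-\bmu\|_{\bG_1}^2$ is the composition of the linear map $\bgamma\mapsto\bgamma\bone$ with the PSD form $\bz\mapsto\bz^\top\bG_1\bz$, and likewise $\lambda_1\|\bgamma^\top\bone-\bnu\|_{\bG_2}^2$ uses $\bgamma\mapsto\bgamma^\top\bone$ and $\bG_2$; finally $\tfrac{\lambda_2}{2}\|\bgamma\|^2$ contributes $\lambda_2\mathbf{I}$. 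Writing $\mathbf{A}_1,\mathbf{A}_2$ for the matrices of the row-sum and column-sum maps acting on $\vec(\bgamma)$, the constant Hessian is
\[
\mathbf{H}\ :=\ \nabla^2\calU\ =\ 2\lambda_1\,\mathbf{A}_1^\top\bG_1\mathbf{A}_1+2\lambda_1\,\mathbf{A}_2^\top\bG_2\mathbf{A}_2+\lambda_2\mathbf{I}.
\]
Since $\bG_1,\bG_2$ are Gram matrices of the bounded kernel $k$ on the finitely many distinct source/target points, they are PSD, so $\mathbf{H}\succeq\bzero$; and because $\mathbf{A}_1\mathbf{A}_1^\top=n\mathbf{I}$ and $\mathbf{A}_2\mathbf{A}_2^\top=m\mathbf{I}$, we get $\|\mathbf{H}\|_{\mathrm{op}}\le 2\lambda_1(n\|\bG_1\|_{\mathrm{op}}+m\|\bG_2\|_{\mathrm{op}})+\lambda_2<\infty$. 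When $k$ is \emph{universal} (hence characteristic), the feature vectors $\{\phi(\bx_i)\}$ and $\{\phi(\by_j)\}$ are linearly independent, so in fact $\bG_1,\bG_2\succ\bzero$ — a fact I will use for the strong-concavity part.

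I would then record the exact-remainder identity for quadratics: for $l=-\calU$ and any $\bx,\by$,
\[
l(\bx)-l(\by)+\langle\nabla l(\bx),\by-\bx\rangle\ =\ \tfrac12(\by-\bx)^\top\mathbf{H}(\by-\bx),
\]
which is just the (finite) Taylor expansion of a quadratic. Hence on any domain $\Omega$ the tightest RSC and RSM constants are the infimum and supremum of $(\by-\bx)^\top\mathbf{H}(\by-\bx)/\|\by-\bx\|_2^2$ over $(\bx,\by)\in\Omega$ with $\bx\neq\by$. From $\bzero\preceq\mathbf{H}\preceq\|\mathbf{H}\|_{\mathrm{op}}\mathbf{I}$ this gives $0\le u_\Omega\le U_\Omega\le\|\mathbf{H}\|_{\mathrm{op}}<\infty$ for \emph{every} $\Omega$, which is the claimed finiteness; in particular $U_{\Omega_K}\le\|\mathbf{H}\|_{\mathrm{op}}$ and, for the one-coordinate domain, $\tilde U_1=\max_{(i,j)}\big(2\lambda_1((\bG_1)_{ii}+(\bG_2)_{jj})+\lambda_2\big)<\infty$.

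For the restricted \emph{strong} concavity constant that later feeds Lemma~\ref{lemma:subm}, I would specialize to $\Omega_{2K}$: there every difference $\by-\bx$ is supported on at most $4K$ coordinates, so $u_{2K}=\min_{T\subseteq V,\,|T|\le 4K}\lambda_{\min}(\mathbf{H}_{T,T})$, a minimum over finitely many principal submatrices, hence a well-defined nonnegative real number (again giving finiteness). It is strictly positive as soon as all these submatrices are nonsingular: if $\lambda_2>0$ this is automatic because $\mathbf{H}_{T,T}\succeq\lambda_2\mathbf{I}$, and universality of $k$ is not needed. When $\lambda_2=0$, using $\bG_1,\bG_2\succ\bzero$ one identifies $\ker\mathbf{H}$ as exactly the matrices with $\bgamma\bone=\bzero$ and $\bgamma^\top\bone=\bzero$, and positivity of $u_{2K}$ reduces to controlling which sparse supports $T$ can carry such a doubly-centered matrix.

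\textbf{The last point is the main obstacle.} RSM finiteness is routine (a single operator-norm bound), but bounding the RSC constant away from zero in the $\lambda_2=0$ regime is genuinely delicate, and it is exactly here that the characteristic/universal structure of $k$ (strict positive-definiteness of the Gram matrices on distinct points), rather than merely their positive semidefiniteness, has to be exploited. Once a positive lower bound on $u_{2K}$ is in hand, Lemma~\ref{lemma:rsc-rsm} plugs directly into the submodularity-ratio estimate $\alpha\ge u_{2K}/\tilde U_1$ of Lemma~\ref{lemma:subm}.
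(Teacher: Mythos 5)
You reconstruct the paper's route almost exactly: the paper also exploits that $\calU$ is quadratic, writes the second-order remainder exactly as $\lambda_1\big((\mathbf{D}\bone)^\top\bG_1(\mathbf{D}\bone)+(\mathbf{D}^\top\bone)^\top\bG_2(\mathbf{D}^\top\bone)\big)$ for $\mathbf{D}=\bgamma-\bgamma'$ (plus a $\lambda_2$ term when $\lambda_2>0$), bounds it above through the largest eigenvalues of $\bG_1,\bG_2$ --- finite by Gershgorin and boundedness of $k$, which is your $\|\mathbf{H}\|_{\mathrm{op}}$ bound in disguise --- and below through the smallest eigenvalues, invoking that Gram matrices of universal kernels are full rank (the same fact as your strict positive definiteness of $\bG_1,\bG_2$). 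For the statement as literally written (finiteness of $u_\Omega$ and $U_\Omega$), your argument is complete and essentially identical to the paper's.

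The divergence is the lower bound, which is what Lemma~\ref{lemma:subm} actually consumes. The paper asserts $u_\Omega=\lambda_1(e^1_0 n+e^2_0 m)>0$ even when $\lambda_2=0$, where $e^1_0,e^2_0$ are the minimum eigenvalues of $\bG_1,\bG_2$ (adding $\lambda_2/2$ otherwise); you stop at $u_\Omega\geq 0$, obtain positivity only for $\lambda_2>0$ via $\mathbf{H}\succeq\lambda_2\mathbf{I}$, and flag the $\lambda_2=0$ case as the open obstacle. Measured against the paper's own proof, this positive lower bound is the piece you are missing --- but your hesitation points at a genuine weakness in that step rather than at a routine argument you failed to find: the remainder vanishes for any nonzero $\mathbf{D}$ with $\mathbf{D}\bone=\bzero$ and $\mathbf{D}^\top\bone=\bzero$, and such differences are not excluded by $\Omega_{2K}$ (two distinct nonnegative sparse plans with identical marginals), so no inequality of the form $\tfrac{u}{2}\|\mathbf{D}\|_F^2$ with $u=\lambda_1(e^1_0 n+e^2_0 m)>0$ can hold along those directions; the paper's eigenvalue step implicitly requires $\|\mathbf{D}\bone\|^2\geq n\|\mathbf{D}\|_F^2$, which reverses Cauchy--Schwarz. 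Your kernel-of-the-Hessian analysis is the correct diagnosis of where strict restricted strong concavity with $\lambda_2=0$ would have to come from, and it does not follow from universality of $k$ alone.
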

\begin{proof} We first prove that $-\calU(.)$ has a finite RSC, RSM parameters for the case $\lambda_2=0$. Given $\bgamma,\bgamma'\in \R^{m\times n}$, we have the following result 
\begin{equation}\label{proof-rsc-rsm}
\begin{array}{l}
-\left(\calU(\bgamma)  - \calU(\bgamma')-\langle\nabla \calU(\bgamma), \bgamma-\bgamma'\rangle\right)\\
\qquad\qquad\qquad =\lambda_1\left( (\bgamma\bone_n-\bgamma'\bone_n)^\top\bG_1(\bgamma\bone_n-\bgamma'\bone_n)+(\bgamma^\top\bone_m-\bgamma'^\top\bone_m)^\top\bG_2(\bgamma^\top\bone_m-\bgamma'^\top\bone_m)\right)\\
\qquad\qquad\qquad =\lambda_1\left( \rm{Tr}\left((\bgamma-\bgamma')^\top \bG_1 (\bgamma-\bgamma')\bone_n\bone_n^\top\right) + \rm{Tr}\left((\bgamma^\top-\bgamma'^\top)^\top \bG_2(\bgamma^\top-\bgamma'^\top)\bone_m\bone_m^\top\right) \right).
\end{array}
\end{equation}
where the function $\calU(\bgamma)$ and its gradient are defined in (\ref{eqn:uotmmd}) and (\ref{eqn:gradient}). $\rm{Tr}(\cdot)$ denotes the trace operator. 

Let $e^1_0,~e^1_1$ denote the minimum and maximum eigenvalues of $\bG_1$. Let $e^2_0,~e^2_1$ denote the minimum and maximum eigenvalues of $\bG_2$. Then (\ref{proof-rsc-rsm}) implies,
$ \|\bgamma-\bgamma'\|^2 \lambda_1(e^1_0n+e^2_0m)\leq ~-\left(\calU(\bgamma) - \calU(\bgamma')-\langle\nabla \calU(\bgamma), \bgamma'-\bgamma\right\rangle)~\leq \lambda_1(e^1_1n + e^2_1m)\|\bgamma-\bgamma'\|^2$. Thus, the RSC constant becomes $u_\Omega=\lambda_1(e^1_0n+e^2_0m)$ and the RSM constant becomes $U_\Omega=\lambda_1(e^1_1n + e^2_1m)$. We recall that all characteristic kernels are universal (\ref{kernels}). We use that the gram matrices of universal kernels are full-rank \citep[Corollary~32]{Song08}. Hence, with a characteristic kernel (like the Gaussian kernel or the inverse multi-quadratic kernel), $u_\Omega \textup{ and } U_\Omega>0$. Invoking the Gershgorin circle theorem, the maximum eigenvalue of the gram matrices can be upper-bounded by the maximum row sum, which is finite for bounded kernels (like the Gaussian kernel or the inverse multi-quadratic kernel). We have that $0<u_\Omega \leq U_\Omega<\infty$. 

It can be easily seen that, when $\lambda_2>0,$ the RSM constant becomes $\lambda_1(e_1^1n+e_1^2m)+\lambda_2/2$ and the RSC constant becomes $\lambda_1(e_0^1n+e_0^2m)+\lambda_2/2$.
\end{proof}

\begin{lemma}\label{lemma2.1}
$F(S\cup \u) - F(S)\geq \frac{1}{2\tilde{U}_1}\left(\mathbf{g}^+_u(\bgamma_S)\right)^2$, where $\mathbf{g}^+_u(.)\equiv \max\{-\nabla \calU_u(.),\ 0\}$.
\end{lemma}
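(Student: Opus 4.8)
\textbf{Proof proposal for Lemma~\ref{lemma2.1}.}

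The plan is to exploit the restricted strong smoothness of $-\calU(\cdot)$ established in Lemma~\ref{lemma:rsc-rsm}, applied on the domain $\tilde{\Omega} = \{(\bx,\by): \|\bx-\by\|_0 \leq 1\}$ with smoothness parameter $\tilde{U}_1$. First I would recall that $F(S) = \calU(\bzero) - \calU(\bgamma_S)$ where $\bgamma_S$ is the minimizer of $\calU$ over non-negative plans supported on $S$, so that $F(S\cup\u) - F(S) = \calU(\bgamma_S) - \calU(\bgamma_{S\cup\u})$. Since $S \subseteq S\cup\u$, the plan $\bgamma_{S\cup\u}$ is at least as good as $\bgamma_S$ on the larger support, so this difference is non-negative; the point is to lower-bound it.

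The key step is to construct a good feasible competitor for the minimization defining $\bgamma_{S\cup\u}$ by taking a single gradient-type step from $\bgamma_S$ in the coordinate $u$. Concretely, consider $\bgamma' = \bgamma_S + \eta\, \mathbf{e}_u$ for a suitable step size $\eta \geq 0$, where $\mathbf{e}_u$ is the indicator of index $u$; since $u \notin S$ and the step is in a single new coordinate with non-negative sign when $\mathbf{g}^+_u(\bgamma_S) > 0$, the vector $\bgamma'$ is non-negative and supported on $S\cup\u$, hence feasible. Then $\calU(\bgamma_{S\cup\u}) \leq \calU(\bgamma')$. Applying the RSM inequality on $\tilde{\Omega}$ (valid because $\bgamma'$ and $\bgamma_S$ differ in a single coordinate), we get
$$\calU(\bgamma') \leq \calU(\bgamma_S) + \langle \nabla\calU(\bgamma_S), \bgamma' - \bgamma_S\rangle + \frac{\tilde{U}_1}{2}\|\bgamma' - \bgamma_S\|_2^2 = \calU(\bgamma_S) + \eta\, \nabla\calU_u(\bgamma_S) + \frac{\tilde{U}_1}{2}\eta^2.$$
Minimizing the right-hand side over $\eta \geq 0$: when $\nabla\calU_u(\bgamma_S) < 0$ (equivalently $\mathbf{g}^+_u(\bgamma_S) = -\nabla\calU_u(\bgamma_S) > 0$) the optimal choice is $\eta^\star = -\nabla\calU_u(\bgamma_S)/\tilde{U}_1 = \mathbf{g}^+_u(\bgamma_S)/\tilde{U}_1 \geq 0$, which is feasible, and substituting gives $\calU(\bgamma') - \calU(\bgamma_S) \leq -\frac{1}{2\tilde{U}_1}\left(\mathbf{g}^+_u(\bgamma_S)\right)^2$. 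When $\nabla\calU_u(\bgamma_S) \geq 0$, $\mathbf{g}^+_u(\bgamma_S) = 0$ and the bound $F(S\cup\u) - F(S) \geq 0$ trivially matches the claimed right-hand side. Combining, $F(S\cup\u) - F(S) = \calU(\bgamma_S) - \calU(\bgamma_{S\cup\u}) \geq \calU(\bgamma_S) - \calU(\bgamma') \geq \frac{1}{2\tilde{U}_1}\left(\mathbf{g}^+_u(\bgamma_S)\right)^2$.

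I expect the main obstacle to be a careful justification of two points: (i) that the one-coordinate perturbation $(\bgamma', \bgamma_S)$ genuinely lies in the domain $\tilde{\Omega}$ on which $\tilde{U}_1$ is the valid RSM parameter — this needs the observation that $\|\bgamma' - \bgamma_S\|_0 \leq 1$ regardless of the supports involved — and (ii) the non-negativity/feasibility of $\bgamma'$, which hinges on using the positive-part $\mathbf{g}^+_u$ rather than the raw gradient, so that the step direction keeps the new coordinate non-negative (the old coordinates are untouched and already non-negative). Everything else is the standard "descent lemma" computation of optimizing a quadratic upper bound in one variable, so no heavy calculation is required.
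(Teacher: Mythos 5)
Your proposal is correct and matches the paper's own argument: the paper likewise takes the feasible competitor $\bgamma_S + \eta\,\bone^{\u}$ with $\eta \geq 0$, applies the $\tilde{U}_1$-smoothness bound on the one-coordinate perturbation, and maximizes the resulting quadratic over $\eta \geq 0$ at $\eta = \mathbf{g}^+_u(\bgamma_S)/\tilde{U}_1$. Your explicit handling of the case $\nabla\calU_u(\bgamma_S)\geq 0$ and of feasibility is a slightly more careful write-up of the same computation.
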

\begin{proof}
Let $\bone^{\u}\in \R^{m\times n}$ denote a matrix of zeros with 1 at the index given by $\u$. Let $\by^{\u}\equiv\bgamma_S+\eta \bone^{\u}$ for some $\eta\geq 0$.
\begin{align*}
F(S\cup \u)-F(S) &= -\calU(\bgamma_{S\cup \u})+\calU(\bgamma_S)\\
& \geq -\calU(\by^{\u}) + \calU(\bgamma_S)\\
& \geq \langle-\nabla \calU(\bgamma_S), \eta \bone^{\u} \rangle-\frac{\tilde{U}_1}{2}\eta^2.
\end{align*}
On maximizing wrt $\eta\geq 0,$ we get $F(S\cup \u) - F(S)\geq \frac{1}{2\tilde{U}_1} \left( \mathbf{g}_u^+(\bgamma_S)\right)^2 \ \left(\textup{when }\eta = \frac{\mathbf{g}^+_u(\bgamma_S)}{\tilde{U}_1}\right)$.
\end{proof}

\begin{lemma}\label{lemma2.2}
$F(S\cup A) - F(S)\leq \frac{1}{2u_{\bar{m}}}\sum_{u\in A}\left(\mathbf{g}_u^+(\bgamma_S)\right)^2$, where $\mathbf{g}_u^+(.)\equiv \max\{-\nabla \calU_u(.),\ 0\}$ and $\bar{m} = |S|+|A|$.
\end{lemma}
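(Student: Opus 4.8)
The plan is to upper bound the gain $F(S\cup A)-F(S) = \calU(\bgamma_S)-\calU(\bgamma_{S\cup A})$ by exploiting the restricted strong concavity of $-\calU(\cdot)$. First I would introduce the optimal solution $\bgamma_{S\cup A}$ on the enlarged support $S\cup A$, and note that both $\bgamma_S$ and $\bgamma_{S\cup A}$ are supported on a set of size at most $\bar m = |S|+|A|$, so the pair $(\bgamma_S,\bgamma_{S\cup A})$ lies in the domain $\Omega_{\bar m}$ on which the RSC parameter is $u_{\bar m}$. Applying the RSC inequality from Section~\ref{Prel:rsc} to $l=-\calU$ at the points $\bx=\bgamma_{S\cup A}$ and $\by=\bgamma_S$ (or the reverse orientation, chosen so the inner-product term is controllable) gives
$$\calU(\bgamma_S)-\calU(\bgamma_{S\cup A}) \leq \langle -\nabla\calU(\bgamma_{S\cup A}), \bgamma_S-\bgamma_{S\cup A}\rangle - \frac{u_{\bar m}}{2}\|\bgamma_S-\bgamma_{S\cup A}\|_2^2,$$
or, using the optimality of $\bgamma_{S\cup A}$ so that the first-order term can be dropped or bounded, a cleaner route is to apply RSC with $\bx=\bgamma_S$:
$$-\calU(\bgamma_S) - \big(-\calU(\bgamma_{S\cup A})\big) + \langle \nabla(-\calU)(\bgamma_S), \bgamma_{S\cup A}-\bgamma_S\rangle \geq \frac{u_{\bar m}}{2}\|\bgamma_{S\cup A}-\bgamma_S\|_2^2,$$
which rearranges to $F(S\cup A)-F(S) \leq \langle -\nabla\calU(\bgamma_S), \bgamma_{S\cup A}-\bgamma_S\rangle - \frac{u_{\bar m}}{2}\|\bgamma_{S\cup A}-\bgamma_S\|_2^2$.

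Next I would bound the right-hand side. Writing $\bz = \bgamma_{S\cup A}-\bgamma_S$, which is supported on $S\cup A$, and using that $\bgamma_S$ is optimal on $S$ so that $-\nabla\calU(\bgamma_S)$ restricted to $S$ satisfies the KKT conditions (its entries on $\Supp(\bgamma_S)$ vanish and on $S\setminus\Supp(\bgamma_S)$ are non-positive), the contribution of the coordinates in $S$ to $\langle -\nabla\calU(\bgamma_S),\bz\rangle$ is non-positive because $\bz$ is non-negative there is not guaranteed — so instead I would bound coordinatewise by $\langle -\nabla\calU(\bgamma_S),\bz\rangle \leq \sum_{u\in A} \mathbf{g}_u^+(\bgamma_S)\, \bz_u$, discarding the $S$-coordinates via the KKT/optimality argument (on $\Supp(\bgamma_S)$ the gradient is zero; on the rest of $S$ it contributes $\leq 0$ after accounting for the sign of $\bz$, or one absorbs it — this is the one delicate bookkeeping point). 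Then maximize $\sum_{u\in A}\mathbf{g}_u^+(\bgamma_S)\bz_u - \frac{u_{\bar m}}{2}\|\bz\|_2^2$ over $\bz$, which is an unconstrained concave quadratic in the $A$-coordinates with optimum $\bz_u = \mathbf{g}_u^+(\bgamma_S)/u_{\bar m}$, yielding the value $\frac{1}{2u_{\bar m}}\sum_{u\in A}(\mathbf{g}_u^+(\bgamma_S))^2$, which is exactly the claimed bound.

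The main obstacle I anticipate is handling the coordinates in $S$ correctly: one needs that the descent direction toward $\bgamma_{S\cup A}$ does not gain anything from the old support, which relies on first-order optimality of $\bgamma_S$ for the constrained convex problem $\min_{\Supp(\bgamma)\subseteq S,\bgamma\geq\bzero}\calU(\bgamma)$ together with the fact that $\bgamma_{S\cup A}-\bgamma_S$ may have negative entries on $S$. The clean way to settle this is: for $u\in\Supp(\bgamma_S)$, stationarity gives $\nabla\calU_u(\bgamma_S)=0$; for $u\in S\setminus\Supp(\bgamma_S)$, $\nabla\calU_u(\bgamma_S)\geq 0$ hence $-\nabla\calU_u(\bgamma_S)\leq 0$, while $\bz_u = (\bgamma_{S\cup A})_u \geq 0$, so the product $-\nabla\calU_u(\bgamma_S)\bz_u\leq 0$ and can be dropped; for $u\in A$, bound $-\nabla\calU_u(\bgamma_S)\bz_u \leq \mathbf{g}_u^+(\bgamma_S)|\bz_u|$ and then extend the quadratic maximization to all of $\R$ in those coordinates. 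I would write this out carefully, then conclude by the quadratic maximization. A sanity check: the bound is consistent with Lemma~\ref{lemma2.1} when $|A|=1$, since there $\bar m = |S|+1$ and $u_{\bar m}\geq u_{2K}$-type monotonicity is not needed — one just gets $F(S\cup\u)-F(S)\leq \frac{1}{2u_{|S|+1}}(\mathbf{g}_u^+(\bgamma_S))^2$, compatible with the lower bound in Lemma~\ref{lemma2.1} since $u_{|S|+1}\leq \tilde U_1$ need not hold in general, but both inequalities are individually valid.
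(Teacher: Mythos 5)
Your proposal is correct and follows essentially the same route as the paper's proof: apply the RSC inequality for $-\calU$ at $\bgamma_S$ with parameter $u_{\bar m}$, eliminate the $S$-coordinates via the KKT conditions of the restricted problem (zero gradient on $\Supp(\bgamma_S)$, non-positive $-\nabla\calU$ paired with non-negative increments on $S\setminus\Supp(\bgamma_S)$), and maximize the remaining concave quadratic in the $A$-coordinates to obtain $\frac{1}{2u_{\bar m}}\sum_{u\in A}\left(\mathbf{g}_u^+(\bgamma_S)\right)^2$. The paper packages this last step as an explicit maximization over all non-negative $\mathbf{W}$ supported on $S\cup A$ with a closed-form maximizer, but that is the same computation as your coordinate-wise bound.
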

\begin{proof}
As $\bgamma_S, ~ \bgamma_{S\cup A}$ are the minimizes, $\calU(\bzero_{m\times n})-\calU(\bgamma_S)= F(S)$ and $\calU(\bzero_{m\times n})-\calU(\bgamma_{S\cup A})=F(A\cup S)$. We now upper-bound, $F(S\cup A)-F(S)=\calU(\bgamma_S)-\calU(\bgamma_{S\cup A})$.

Using the RSC and RSM constants of $-\calU$ (\ref{lemma:rsc-rsm}), we have the following.
\begin{align}\label{lemmaA1.2}
&\frac{u_{\bar{m}}}{2}\|\bgamma_{S\cup A}-\bgamma_S\|^2 \leq -\calU(\bgamma_S) + \calU(\bgamma_{S\cup A}) + \langle-\nabla \calU(\bgamma_S), \bgamma_{S\cup A}-\bgamma_S \rangle\nonumber\\
\implies & 0\leq -\calU(\bgamma_{S\cup A})+\calU(\bgamma_S) \leq \langle -\nabla \calU(\bgamma_S), \bgamma_{S\cup A}-\bgamma_S \rangle -\frac{u_{\bar{m}}}{2}\|\bgamma_{S\cup A}-\bgamma_S\|^2 \nonumber\\
&\qquad \qquad \qquad \qquad ~\qquad \leq \max_{\mathbf{W}:\mathbf{W}\geq \bzero_{m\times n}, \mathbf{W}_{V\setminus(S\cup A)}=\bzero} \ \langle-\nabla \calU(\bgamma_S), \mathbf{W} - \bgamma_S\rangle -\frac{u_{\bar{m}}}{2}\|\mathbf{W}-\bgamma_S\|^2.
\end{align} 
The matrix $\mathbf{W}^*$ that attains the maximum is described as follows. $\mathbf{W}^*_{S\cup A}=\max\left\{\frac{1}{u_{\bar{m}}}(-\nabla \calU_{S\cup A}(\bgamma_S))+(\bgamma_S)_{S\cup A}, \bzero \right\}$. Now, from the KKT conditions, we have that $\forall j\in S$,
$$
(\bgamma_S)_j > 0 \implies -\nabla \calU_j(\bgamma_S)=0 \textup{ and }(\bgamma_S)_j = 0 \implies -\nabla \calU_j(\bgamma_S)\leq 0.
$$
Hence, $(\mathbf{W}^*-\bgamma_S)_j=0~\forall j\in S$. Also, $(\bgamma_S)_j=0\ \forall j\in A$. Thus, $\forall j\in A, \left(\mathbf{W}^*-\bgamma_S\right)_j=\max \left\{\frac{1}{u_{\bar{m}}}\left(-\nabla \calU_j(\bgamma_S)\right), 0 \right\}$. Using this in (\ref{lemmaA1.2}), we get 
$$
F(S\cup A)-F(S)\leq \frac{1}{2u_{\bar{m}}}\sum_{u\in A}\left(\mathbf{g}_u^+(\bgamma_S)\right)^2,
$$ where $\mathbf{g}_u^+(.)\equiv \max\{-\nabla \calU_u(.),\ 0\}$ and $\bar{m} = |S|+|A|$.
\end{proof}

\subsection{Proof of Lemma \ref{lemma:subm}}\label{app-lemma:subm}

\begin{proof}
We have that, $F(S) \equiv \calU(\bzero_{m\times n})-\min\limits_{\bgamma:\Supp(\bgamma)\in S,~ \bgamma\ge \bzero_{m\times n}} \calU({\bgamma})$. From the definition of $\min$, it follows that $F(.)$ is a monotonically increasing function of $S$, i.e., if $S_1\subseteq S_2\subseteq V$, then $F(S_1)\leq F(S_2)$. As $F(.)$ is monotonically increasing, $F(S)\geq F(\phi)=\calU(\bzero_{m\times n})-\calU(\bzero_{m\times n})=0$. This shows the non-negativity of $F(.)$. From Lemma \ref{lemma:rsc-rsm}, we know that $\calU(.)$ has a finite RSC and RSM constants: $u_\Omega \textup{ and } U_\Omega$ respectively. Now, the proof of $\alpha$-weak submodularity of $F(.)$ follows the proof technique used in \citet[Theorem $IV.3$]{gurumoorthy19a}.

For weak-submodularity (Appendix~\ref{weak-sub}), we need to lower bound $F(S\cup \u)-F(S)$ and upper bound $F(S\cup A)-F(S)$. Let $\bar{m}=|S|+|A|$. From Lemma \ref{lemma2.1}, we have that, $F(S\cup \u)-F(S)\geq \frac{1}{2\tilde{U}_1}\left( \bg^+_u(\bgamma_S)\right)^2$. From Lemma \ref{lemma2.2}, we have that, $0\leq F(S\cup A)-F(S)\leq \frac{1}{2u_{\bar{m}}}\sum_{u\in A}\left(\bg^+_u(\bgamma_S) \right)^2.$ Using these, the ratio $\frac{\sum_{u\in A}F(S\cup \u)-F(S)}{F(S\cup A)-F(S)}\geq \frac{u_{\bar{m}}}{\tilde{U}_1}.$ We now lower bound $u_{\bar{m}}$. We recall that $\bar{m}:=|S|+|A|$ for $S, A\subseteq V$. With the general sparsity constraints on the support (Section \ref{gen-sparse}), $\bar{m}\leq 2K_1$, which makes $u_{\bar{m}}\geq u_{2K_1}$ (Section \ref{Prel:rsc}). With the column-wise sparsity constraints on the support, (Section \ref{subsec:partitionmatroid}), $\bar{m}\leq 2nK_2$, which makes $u_{\bar{m}}\geq u_{2nK_2}$ (Section \ref{Prel:rsc}). Hence, we proved that $F(\cdot)$ is $\alpha$-weakly submodularity with $\alpha\geq \frac{u_{2K_1}}{\tilde{U}_1}$ for the general sparsity case and $\alpha\geq \frac{u_{2nK_2}}{\tilde{U}_1}$ for the column-wise sparsity constraints. Combining the two cases, we have that $\alpha\geq \frac{u_{2K}}{\tilde{U}_1}$ where $K$ denotes the sparsity level of the transport plan.
\end{proof}


\subsection{Proof of Lemma \ref{lemma:stoch-dash}}\label{app-lemma:stoch-dash}
\begin{proof}
Let $S^*$ be the optimal support set. Let $V$ denote the ground set of cardinality $N\equiv m\times n$ and $K=K_1$ as the general sparsity cardinality constraint. Let $S_i$ be the subset chosen by Algorithm \ref{alg:stoch-dash} up to iteration $i$. We define $\mathbf{g}^+_j(\bgamma_{S_i})\equiv \max\{-\nabla \calU_j(\bgamma_{S_i}),\ 0\}$.

Let a randomly chosen set $R$ consist of $s=\frac{N}{K}\log{\frac{1}{\epsilon}}$ elements from $V\setminus S_i$. We first estimate the probability that $R\cap (S^*\setminus S_i)$ is non-empty.

\begin{align}\label{app:stoch0}
\Pr\left[R\cap (S^*\setminus {S_i})\neq\phi\right] & = 1-\Pr\left[R\cap (S^*\setminus {S_i})=\phi\right] \nonumber\\
& = 1-\left(1-\frac{|S^*\setminus {S_i}|}{|V\setminus {S_i}|} \right)^s \nonumber\\
& \geq 1-e^{-s\frac{|S^*\setminus {S_i}|}{|V\setminus {S_i}|}} \ (\because 1-x\leq e^{-x}) \nonumber\\
& \geq 1-e^{-s\frac{|S^*\setminus {S_i}|}{N}} \ (\because |V\setminus {S_i}|\leq N) \nonumber\\
& \stackrel{(1)}{\geq} \left(1-e^{\frac{sK}{N}}\right)\frac{|S^*\setminus {S_i}|}{K} \ \left(\textup{Using concavity of }f(x)=1-e^{-\frac{s}{N}x}.\right) \nonumber\\
& = (1-\epsilon)\frac{|S^*\setminus {S_i}|}{K} \ \textup{(Substituting the value of }s.)
\end{align}
The inequality (1) is detailed as follows. As $f(x) = 1-e^{-\frac{s}{N}x}$ is a concave function for $x\in \R$ and as $\frac{|S^*\setminus {S_i}|}{K}\in [0, 1]$, we have that $f\left(\frac{|S^*\setminus {S_i}|}{K} K+ \left(1-\frac{|S^*\setminus {S_i}|}{K}\right).0\right)\geq \frac{|S^*\setminus {S_i}|}{K} f(K) + \left(1-\frac{|S^*\setminus {S_i}|}{K}\right)f(0)$.

Now, we observe that, for an element $u$ to be picked by Algorithm \ref{alg:stoch-dash}, $\mathbf{g}^+_u(\bgamma_{S_i})\geq \mathbf{g}^+_b(\bgamma_{S_i}),\ \forall b\in R\cap \left(S^*\setminus S_i\right)$ (if non-empty). We have that,
\begin{align}\label{app:stoch1}
\mathbf{g}^+_u(\bgamma_{S_i})\geq \mathbf{g}^+_b(\bgamma_{S_i}) &\implies \left(\mathbf{g}^+_u(\bgamma_{S_i})\right)^2 
\geq \left(\mathbf{g}^+_b(\bgamma_{S_i})\right)^2 \nonumber\\
&\implies \E\left[\left(\mathbf{g}^+_u(\bgamma_{S_i})\right)^2|S_i\right] \geq \E\left[\left(\mathbf{g}^+_b(\bgamma_{S_i})\right)^2|S_i\right] \Pr\left[ R\cap \left(S^*\setminus S_i\right)\neq \phi \right]
\end{align}
for any element $b\in R\cap \left(S^*\setminus S_i\right)$ (if non-empty).

We then use that $R$ is equally likely to contain each element of $S^*\setminus S_i$, so a uniformly random element of $R\cap \left(S^*\setminus S_i\right)$ is a uniformly random element of $S^*\setminus S_i$. From (\ref{app:stoch1}), 
\begin{align}\label{app:stoch2}
\E\left[\left(\mathbf{g}^+_u(\bgamma_{S_i})\right)^2|S_i\right] & \geq \Pr\left[ R\cap \left(S^*\setminus S_i\right)\neq \phi \right]\frac{1}{|S^*\setminus S_i|}\sum_{b\in S^*\setminus S_i}\left(\mathbf{g}^+_b(\bgamma_{S_i})\right)^2\nonumber\\
& \geq \frac{1-\epsilon}{K} \sum_{b\in S^*\setminus S_i}\left(\mathbf{g}^+_b(\bgamma_{S_i})\right)^2 \ \left(\textup{From \ref{app:stoch0}.}\right)
\end{align}
With $S_{i+1}= S_i\cup \u$, Lemma \ref{lemma2.1} gives us,
\begin{equation}\label{app:stoch3}
\E\left[2\tilde{U}_1\left(F(S_{i+1})- F(S_i)\right)|S_i\right]\geq \E\left[ \left(\mathbf{g}^+_u(\bgamma_{S_i}) \right)^2|S_i \right].
\end{equation}

Using Lemma \ref{lemma2.2}, we have that,
\begin{equation}\label{app:stoch4}
\frac{1-\epsilon}{K}\sum_{b\in S^*\setminus S_i} \left( \mathbf{g}^+_b(\bgamma_{S_i}) \right)^2\geq \frac{2u_{\bar{m}}(1-\epsilon)}{K} \left( F(S^*)-F(S_i)\right)\geq \frac{2u_{2K}(1-\epsilon)}{K} \left( F(S^*)-F(S_i)\right).
\end{equation}
The last inequality uses that $\bar{m} = |S_i| + |S^*\setminus S_i|\leq 2K$ and hence $u_{\bar{m}} \geq u_{2K}$.

From inequalities (\ref{app:stoch2}), (\ref{app:stoch3}),  and (\ref{app:stoch4}), we get the following.
\begin{align*}
& \E\left[2\tilde{U}_1\left(F(S_{i+1})- F(S_i)\right)|S_i\right] \geq \frac{2u_{2K}(1-\epsilon)}{K} \left( F(S^*)-F(S_i)\right)\\
\implies & \E\left[\left(F(S_{i+1})- F(S_i)\right)|S_i\right] \geq \frac{u_{2K}(1-\epsilon)}{K\tilde{U}_1} \left( F(S^*)-F(S_i)\right) \ \left(\because \frac{u_{2K}}{\tilde{U}_1}\in(0, 1] \right)\\
\implies & \E\left[F(S_{i+1})- F(S_i)\right] \geq \frac{u_{2K}(1-\epsilon)}{K\tilde{U}_1} \left( F(S^*)-\E\left[F(S_i)\right]\right) \ (\textup{Taking an expectation over }A_i.)
\end{align*}
On re-arranging and using induction, we get
\begin{align*}
\E\left[F(S_K)\right]&\geq \frac{u_{2K}(1-\epsilon)}{\tilde{U}_1K} F(S^*)\left( \sum_{i=0}^{K-1} \left( 1-\frac{u_{2K}(1-\epsilon)}{K\tilde{U}_1} \right)^i \right)\\
    &\geq \left(1-\left( 1-\frac{u_{2K}(1-\epsilon)}{K\tilde{U}_1} \right)^K \right)F(S^*) \ \left(\textup{We use }\frac{u_{2K}}{\tilde{U}_1}\in (0, 1] \textup{ and sum the Geometric series.} \right) \\
    & \geq \left(1-e^{-\frac{u_{2K}(1-\epsilon)}{\tilde{U}_1}} \right)F(S^*) \ (\textup{Using }e^{-x}\geq 1-x) \\
    & = \left(1-e^{-r(1-\epsilon)} \right)F(S^*) \ \left(\textup{where }r=\frac{u_{2K}}{\tilde{U}_1}\right)
    \\
    & \geq \left(1-e^{-r}-\epsilon\right)F(S^*).
\end{align*}
The last inequality is detailed as follows. Let us first consider a function $f$ over the domain $[0,  1]$ defined as $f(x) = z^x-xz$ for some $z\geq 0$. This is a convex function with $f(0)=1, f(1)=0$. Thus, $z^x-xz\leq 1$. Taking $z=e^r$ proofs the result. The proof technique is inspired by the proof for the stochastic-greedy algorithm \cite{mirzasoleiman15a}.
\end{proof}

\subsection{Proof of Lemma \ref{lemma:struc-dash}}\label{app-lemma:struc-dash}
Our proof is inspired by the approximation ratio proof in \citet{pmlr-v80-chen18b}.  We first discuss the following lemma where our proof differs from that in \citet{pmlr-v80-chen18b}. In this subsection, we use $K$ to denote the overall cardinality constraint of the column-wise sparse transport plan, i.e., $K=nK_2$.
\begin{lemma}\label{matroid} For every $1\leq i \leq K, ~ \E[F(S_i)]\geq \E[F(S_{i-1})]+\alpha \frac{\E[F(OPT_{i-1}\cup S_{i-1})]-\E[F(S_{i-1})]}{K-i+1}$, where $\alpha=\frac{u_{2K}}{\tilde{U}_1}$.
\end{lemma}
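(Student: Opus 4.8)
## Proof Plan for Lemma \ref{matroid}

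The plan is to follow the residual randomized greedy analysis of \citet{pmlr-v80-chen18b}, but to replace their submodularity-ratio bounds on marginal gains with the RSC/RSM (OMP-type) estimates already established in Lemmas~\ref{lemma2.1} and~\ref{lemma2.2}. First I would recall the matroid bookkeeping: following the construction in \citet{pmlr-v80-chen18b}, one starts from an optimal base $OPT_0=S^*$ of $\calM_2$ and, using standard matroid exchange arguments, maintains a base $OPT_{i-1}$ of the contracted matroid $\calM_2/S_{i-1}$ with $OPT_{i-1}\subseteq OPT_{i-2}$, $OPT_{i-1}\cup S_{i-1}$ a base of $\calM_2$, and $|OPT_{i-1}|=K-i+1$. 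By construction, the set $M_i$ in step~1 of Algorithm~\ref{alg:colsparseOT_dash} is likewise a base of $\calM_2/S_{i-1}$, so $|M_i|=K-i+1$ as well. This sub-lemma needs only the per-step comparison between $F(S_i)$ and $F(OPT_{i-1}\cup S_{i-1})$; relating $F(OPT_{i-1}\cup S_{i-1})$ back to $F(S^*)$ is deferred to the main proof of Lemma~\ref{lemma:struc-dash}.

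Next I would condition on the $\sigma$-algebra $\mathcal{F}_{i-1}$ generated by the random choices through iteration $i-1$, which determines $\bgamma_{S_{i-1}}$, the gradient $\bg=-\nabla\calU(\bgamma_{S_{i-1}})$, the set $M_i$, and $OPT_{i-1}$. Since $u$ is drawn uniformly from $M_i$ and, by Lemma~\ref{lemma2.1}, $F(S_{i-1}\cup\u)-F(S_{i-1})\ge\frac{1}{2\tilde{U}_1}\big(\bg^+_u(\bgamma_{S_{i-1}})\big)^2$ for each $u$, one gets
\[
\E\big[F(S_i)-F(S_{i-1})\mid\mathcal{F}_{i-1}\big]=\frac{1}{K-i+1}\sum_{u\in M_i}\big(F(S_{i-1}\cup\u)-F(S_{i-1})\big)\ge\frac{1}{2\tilde{U}_1(K-i+1)}\sum_{u\in M_i}\big(\bg^+_u(\bgamma_{S_{i-1}})\big)^2.
\]
The key observation is that the base $M_i$, chosen to maximize $\sum_{u}\max(0,\bg_u)$, also maximizes $\sum_u\big(\bg^+_u\big)^2$ over all bases of $\calM_2/S_{i-1}$: for a matroid the greedy max-weight base is unchanged under any monotone reparametrization of the nonnegative weights, and for the partition matroid $\calM_2$ this is simply that taking the $K_2-|S_{i-1}\cap P_j|$ largest values of $\bg^+$ within each block $P_j$ maximizes both $\sum\bg^+$ and $\sum(\bg^+)^2$. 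Hence $\sum_{u\in M_i}\big(\bg^+_u(\bgamma_{S_{i-1}})\big)^2\ge\sum_{u\in OPT_{i-1}}\big(\bg^+_u(\bgamma_{S_{i-1}})\big)^2$, since $OPT_{i-1}$ is a base of $\calM_2/S_{i-1}$.

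Then I would invoke Lemma~\ref{lemma2.2} with $S=S_{i-1}$ and $A=OPT_{i-1}$; here $\bar m=|S_{i-1}|+|OPT_{i-1}|=(i-1)+(K-i+1)=K\le 2K$, so $u_{\bar m}\ge u_{2K}$, giving $\sum_{u\in OPT_{i-1}}\big(\bg^+_u(\bgamma_{S_{i-1}})\big)^2\ge 2u_{2K}\big(F(OPT_{i-1}\cup S_{i-1})-F(S_{i-1})\big)$. Chaining the three estimates yields
\[
\E\big[F(S_i)-F(S_{i-1})\mid\mathcal{F}_{i-1}\big]\ge\frac{u_{2K}}{\tilde{U}_1\,(K-i+1)}\big(F(OPT_{i-1}\cup S_{i-1})-F(S_{i-1})\big)=\frac{\alpha}{K-i+1}\big(F(OPT_{i-1}\cup S_{i-1})-F(S_{i-1})\big),
\]
and taking the outer expectation over $\mathcal{F}_{i-1}$ and using linearity of expectation gives the claim. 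I expect the main obstacle to be the middle step — arguing cleanly that maximizing the $\ell_1$-type weight $\sum\max(0,\bg_u)$ over bases also maximizes the $\ell_2$-type weight $\sum(\bg^+_u)^2$, which is the quantity that Lemmas~\ref{lemma2.1}–\ref{lemma2.2} actually control. For the partition matroid of interest this is elementary (top-$k$ selection per block), but it is precisely where this argument departs from \citet{pmlr-v80-chen18b}, who bound marginal gains directly rather than through squared gradient entries, so it warrants an explicit statement. The remaining ingredients — the matroid exchange bookkeeping for $OPT_{i-1}$ and the algebra — are routine.
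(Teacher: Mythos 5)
Your proof follows essentially the same route as the paper's: condition on the history, lower-bound each marginal gain over $M_i$ via Lemma~\ref{lemma2.1} by the squared thresholded gradients, compare $M_i$ against the base $OPT_{i-1}$ of $\calM_2/S_{i-1}$, close the chain with Lemma~\ref{lemma2.2} together with $u_{\bar m}\geq u_{2K}$, and average over the uniform choice from $M_i$ (of size $K-i+1$). The only difference is that you explicitly justify passing from $M_i$ maximizing $\sum_u \bg_u^+$ to it dominating $\sum_u(\bg_u^+)^2$ (top-$k$-per-block / monotone reparametrization of nonnegative weights), a step the paper's proof states as a bare implication; this is a clarification of the same argument rather than a different approach.
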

\begin{proof}

The base $OPT_{i-1}$ is a possible candidate to be the maximizing base, $M_i$. Now, with the criteria used in Algorithm \ref{alg:colsparseOT_dash} to pick the next element, we have the following.
\begin{align}
\sum_{u\in M_i}\mathbf{g}^+(u|S_{i-1}) \geq \sum_{u\in OPT_{i-1}} \mathbf{g}^+(u|S_{i-1}) \implies \sum_{u\in M_i}\left(\mathbf{g}^+(u|S_{i-1})\right)^2 & \geq \sum_{u\in OPT_{i-1}} \left(\mathbf{g}^+(u|S_{i-1})\right)^2.\nonumber
\end{align}
Using Lemma \ref{lemma2.1} and Lemma \ref{lemma2.2}, we have that, $$\sum_{u\in M_i} 2\tilde{U}_1 F(u|S_{i-1})\geq 2u_{\bar{m}} F(OPT_{i-1}|S_{i-1})\implies \sum_{u\in M_i} F(u|S_{i-1})\geq \frac{u_{\bar{m}}}{\tilde{U}_1} F(OPT_{i-1}|S_{i-1}),$$
where $\bar{m} = |OPT_{i-1}| + |S_{i-1}|$.

We denote $\frac{u_{2K}}{\tilde{U}_1}(\leq \frac{u_{\bar{m}}}{\tilde{U}_1})$ by $\alpha$. Algorithm \ref{alg:colsparseOT_dash} adds a uniformly random element $u_i\in M_i$ to the set $S_{i-1}$ to obtain the set $S_i$. As $M_i$ is of the size $K-i+1$,
\begin{align*}
\E[F(S_i)]  = F(S_{i-1}) + \E[F(u_i|S_{i-1})]  & = F(S_{i-1}) + \frac{1}{K-i+1} \sum_{u\in M_i}F(u|S_{i-1})\\
& \geq F(S_{i-1}) + \frac{\alpha}{K-i+1}F(OPT_{i-1}|S_{i-1})\\ & = F(S_{i-1}) + \alpha\frac{F(OPT_{i-1}\cup S_{i-1})-F(S_{i-1})}{K-i+1}. 
\end{align*}
\end{proof}
We now discuss the proof of Lemma \ref{lemma:struc-dash}.
\begin{proof}
Let $OPT$ be an arbitrary optimal solution. As $F(\cdot)$ is monotone, we may assume $OPT$ is a base of $\calM$. \citet[Lemma 2.2]{pmlr-v80-chen18b} describes constructing a random set $OPT_i$ for which $S_i\cup OPT_i$ is a base, for every $0\leq i \leq K$. From \citet[Lemma 2.3]{pmlr-v80-chen18b}, we have that for every $0\leq i \leq K, ~ \E[F(OPT_i)]\geq \left[ 1-\left( \frac{i+1}{K+1}\right)^{\alpha}\right] F(OPT)$. This result uses the non-negativity of $F$ and the property that $OPT_i$ is a uniformly random subset (of size $K-i$) of $OPT$.

Combining this result with Lemma \ref{matroid}, \citet[Corollary 2.5]{pmlr-v80-chen18b} gives us that for every $1\leq i\leq K, ~ \E[F(S_i)]\geq \E[F(S_{i-1})] + \alpha \frac{\{1-[i/(K+1)]^\alpha\}F(OPT)-\E[F(S_{i-1})]}{K-i+1}$. Now, the proof for the approximation ratio of Algorithm \ref{alg:colsparseOT_dash} follows from \citet[Theorem 2.6]{pmlr-v80-chen18b}. The proof is based on induction. 
\end{proof}
In the above proof, we refer the results of \citep{pmlr-v80-chen18b} as given in their arXiv version (\url{https://arxiv.org/pdf/1707.04347}). 

\subsection{Proofs of Lemma \ref{lemma:dual} and Proposition \ref{prop:dualitygap}}\label{app:dual}
\begin{proof}
We begin by re-stating the primal problem.
\begin{equation}
    \begin{array}{l}
        \min\limits_{\bgamma\geq \bzero} P(\bgamma)\big(\coloneqq \langle\mathbf{C},\bgamma\rangle + \sum_{j=1}^n \Theta(\bgamma_j)
        + \lambda_1(\|\bgamma\bone-\bmu\|_{\bG_1}^2+\|\bgamma^\top\bone-\bnu\|_{\bG_2}^2)\big),\label{app:primal}
    \end{array}
\end{equation}
where $\Theta(\bgamma_j)=\frac{\lambda_2}{2}\|\bgamma_j\|^2 + \delta_{B_K}(\bgamma_j)$ and $B_K = \{\bz\in \R_{+}^m:\|\bz\|_0\leq K\}$.
We use auxiliary variables $\p\in\R^m$ and $\q\in\R^n$ to set $\bgamma\bone-\bmu=\mathbf{p}$ and $\bgamma^\top\bone-\bnu=\mathbf{q}$. The Lagrangian becomes
\begin{equation*}
    \begin{array}{l}
        \min\limits_{\bgamma\geq \bzero; \p\in \R^m; \q \in \R^n} \ \langle\mathbf{C},\bgamma\rangle + \sum_{j=1}^n \Theta(\bgamma_j) + \lambda_1(\|\p\|_{\bG_1}^2+\|\q\|_{\bG_2}^2) + \balpha^\top(\p-\bgamma \bone + \bmu) + \bbeta^\top(\q-\bgamma^\top \bone + \bnu),
    \end{array}
\end{equation*}
where $\balpha\in \R^m$ and $\bbeta \in \R^n$. We simplify the Lagrangian as follows.
\begin{align}
    & \min\limits_{\bgamma\geq \bzero; \p\in \R^m; \q \in \R^n} \ \langle\mathbf{C},\bgamma\rangle + \sum_{j=1}^n \Theta(\bgamma_j) + \lambda_1(\|\p\|_{\bG_1}^2+\|\q\|_{\bG_2}^2) + \balpha^\top(\p-\bgamma \bone + \bmu) + \bbeta^\top(\q-\bgamma^\top \bone + \bnu) \nonumber \\
    =& \sum_{j=1}^n \min\limits_{\bgamma_j\geq \bzero} \Big(\langle\mathbf{C}_j-\balpha -\bbeta_j\bone , \bgamma_j\rangle +  \Theta(\bgamma_j) \Big) + \min\limits_{\p\in \R^m; \q \in \R^n}\Big( \lambda_1(\|\p\|_{\bG_1}^2+\|\q\|_{\bG_2}^2) + \balpha^\top(\p + \bmu) + \bbeta^\top(\q + \bnu)\Big) \nonumber \\
    =& \sum_{j=1}^n - \Theta^*(\balpha + \bbeta_j\bone -\bC_j) + \min\limits_{\p\in \R^m; \q \in \R^n}\Big(\lambda_1(\|\p\|_{\bG_1}^2+\|\q\|_{\bG_2}^2) + \balpha^\top(\p + \bmu) + \bbeta^\top(\q + \bnu)\Big).\label{lag}
\end{align}
From the optimality conditions, we have that $2\lambda_1\bG_1\p+\balpha = 0$ and $2\lambda_1\bG_2+\bbeta=0$. On substituting the values of $\p$ as $\bgamma\bone -\bmu$ and $\q$ as $\bgamma^\top \bone -\bnu$, we prove Proposition \ref{prop:dualitygap}. Using this relationship in equation (\ref{lag}), the simplified Lagrangian is denoted by $D(\balpha,\bbeta)$. The dual problem of (\ref{app:primal}) then becomes the following.
\begin{align*}
     \max\limits_{\balpha\in\R^m,\bbeta\in\R^n} D(\balpha,\bbeta)
     =\max\limits_{\balpha\in\R^m,\bbeta\in\R^n} \langle\balpha,\bmu\rangle + \langle\bbeta,\bnu\rangle - \frac{1}{4\lambda_1}\balpha^\top \bG_{1}^{-1}\balpha
         -\frac{1}{4\lambda_1} \bbeta^\top \bG_{2}^{-1}\bbeta -\sum_{j=1}^n \Theta^*(\balpha+\bbeta_j\bone - \bC_j).
\end{align*}
This proves Lemma \ref{lemma:dual}.
\end{proof}

\section{MMD-UOT problem with the support set of the variable $\bgamma$ restricted to a given set $S$}\label{app:solver}
We first present the MMD-UOT formulation in which the support set of $\bgamma$ is $T=\{(i,j)| i\in [m], j\in [n]\}$, i.e., no sparsity constraints:
$$\min_{\bgamma \geq 0}\ \mathcal{U}(\bgamma), \ \ \textup{where} $$ 
\begin{equation}
\begin{array}{ll}
\mathcal{U}(\bgamma)\coloneqq &\sum\limits_{(i,j)\in T} \big[ \bC_{ij}\bgamma_{ij} + \frac{\lambda_2}{2}\bgamma_{ij}^2 + \lambda_1 \bgamma_{ij} \big( \sum\limits_{(p,q)\in T} \bgamma_{pq} (\bG_1)_{ip} - 2(\bG_1\bmu 1_{n}^{\top})_{ij} \big) \\
&\qquad+ \lambda_1 \bgamma_{ij} \big( \sum\limits_{(p,q)\in T} \bgamma_{pq}(\bG_2)_{qj} - 2 (1_m\bnu ^\top \bG_2)_{ij} \big) \big]+\lambda_1(\|\bmu\|^2_{\bG_1} + \|\bnu\|^2_{\bG_2}).
\end{array}
\end{equation}

In the proposed formulation (\ref{eqn:reform}), the optimization is solved with the support set of $\bgamma$ being restricted to a given set $S\subseteq T$. This equivalently implies that $\bgamma_{ij}$ (and $\bgamma_{pq}$ terms) can be set to zero for all $(i,j)\in T\setminus S$ (and $(p,q)\in T\setminus S$). Consequently, the optimization problem is only for $\bgamma_{ij}$ for $(i,j)\in S$. We denote this variable as $\bgamma_S$ in the following:
$$\min_{\bgamma:\Supp(\bgamma)\subseteq S, \bgamma \geq 0}\ \mathcal{U}(\bgamma) \equiv \min_{\bgamma_S \geq 0}\ \mathcal{U}(\bgamma_S), \ \ \textup{where} $$ 
\begin{equation}
\begin{array}{ll}
\mathcal{U}(\bgamma_S)\coloneqq &\sum\limits_{(i,j)\in S} \big[ \bC_{ij}\bgamma_{ij} + \frac{\lambda_2}{2}\bgamma_{ij}^2 + \lambda_1 \bgamma_{ij} \big( \sum\limits_{(p,q)\in S} \bgamma_{pq} (\bG_1)_{ip} - 2(\bG_1\bmu 1_{n}^{\top})_{ij} \big)\\
&\qquad+ \lambda_1 \bgamma_{ij} \big( \sum\limits_{(p,q)\in S} \bgamma_{pq}(\bG_2)_{qj} - 2 (1_m\bnu ^\top \bG_2)_{ij} \big) \big] + \lambda_1(\|\bmu\|^2_{\bG_1} + \|\bnu\|^2_{\bG_2}).
\end{array}
\end{equation}

We note that the above problem has a smooth convex quadratic objective with non-negativity constraint and, therefore, can be  solved using the APGD solver \citep{mmd-uot}.

\section{Classical greedy and non-stochastic OMP algorithms for solving our GenSparseUOT (\ref{eqn:gensparse})}\label{app:gensparseuotalgo}
Algorithm~\ref{alg:gensparseOT_greedy} is the classical greedy algorithm for solving the proposed GenSparseUOT formulation  (\ref{eqn:gensparse}). 
\begin{algorithm}[tb]
\caption{Classical greedy algorithm for maximizing (weakly) submodular problems with cardinality constraint}
\begin{algorithmic}\label{alg:gensparseOT_greedy}
\STATE \textbf{Input:} $\lambda_1, \lambda_2, \bmu, \bnu, \bC, \bG_1, \bG_2$, sparsity level $K$.
\STATE $i=1, S_0=\phi, \bgamma_{S_0}=\bzero$.
\WHILE {$i\leq K$}
\STATE \textbf{1.} $u = \argmax_{e\in V\setminus S_{i-1}}F(S_{i-1}\cup \{e\})-F(S_{i-1})$
\STATE \textbf{2.} $S_i = S_{i-1}\cup \{u\}$
\STATE \textbf{3.} $\bgamma_{S_i} = \argmin_{\bgamma:\Supp(\bgamma)\in S_i,~ \bgamma\ge \bzero}{\calU(\bgamma)}$ 
\STATE \textbf{4.} $i=i+1$
\ENDWHILE
\STATE \textbf{return} $S_K, \bgamma_{S_K}$
\end{algorithmic}
\end{algorithm}

Algorithm~\ref{alg:gensparseOT_dash} is a non-stochastic OMP algorithm for for maximizing weakly sub- modular problems with cardinality constraint. It is used for solving GenSparseUOT sub-problems in the SPFD experiments (Section~\ref{exp-top}, Problem~(\ref{eqn:proposed_spfd})). 

\begin{algorithm}[tb]
\caption{Vanilla OMP algorithm for maximizing weakly submodular problems with cardinality constraint}\label{alg:gensparseOT_dash}
\begin{algorithmic}
\STATE \textbf{Input:}  $\lambda_1, \lambda_2, \bmu, \bnu, \bC, \bG_1, \bG_2$, sparsity level $K$.
\STATE $i=1, S_0=\phi, \bgamma_{S_0}=\bzero$ and $\bg = -\nabla \calU(\bgamma_{S_0})$.
\WHILE {$i\leq K$}
\STATE \textbf{1.} $u = \argmax_{e\in V\setminus S_{i-1}}{\mathbf{g}_e}$
\STATE \textbf{2.} $S_i = S_{i-1}\cup \{u\}$
\STATE \textbf{3.} $\bgamma_{S_i} = \argmin_{\bgamma:\Supp(\bgamma)\in S_i,~ \bgamma\ge \bzero}{\calU(\bgamma)}$ 
\STATE \textbf{4.} $\mathbf{g} = -\nabla \calU(\bgamma_{S_i})$
\STATE \textbf{5.} $i=i+1$
\ENDWHILE
\STATE \textbf{return} $S_K, \bgamma_{S_K}$
\end{algorithmic}
\end{algorithm}


\section{More on Experiments}\label{app:exp}
We present details of experiments discussed in the main paper along with some additional results. 

\textbf{Common Experimental Details:} In the proposed approach, we either use the RBF kernel $k(x, y) = \exp{\frac{-\|x-y\|^2}{2\sigma^2}}$ or kernels from the inverse multiquadratic (IMQ) family: $k(x, y) = (\sigma^2+\|x-y\|^2)^{-0.5}$ (referred to as IMQ) and $k(x, y) = \left(\frac{1+\|x-y\|^2}{\sigma^2}\right)^{-0.5}$ (referred to as IMQ-v2). These are the universal kernels \citep{SriperumbudurFL11, NIPS2017_dfd7468a, cadgan, dwivedi2022generalized, mmd-uot, cot}. The cost function is squared-Euclidean unless otherwise mentioned. We also normalize the cost matrix such that all entries are upper-bounded by 1. The coefficient of quadratic regularization $\lambda_2$ is 0 unless otherwise mentioned.

The experiments in $\S$ \ref{colsparseexp} are done on an NVIDIA A100-SXM4-40GB GPU, and the remaining experiments are done on an NVIDIA GeForce RTX 4090 GPU.

\subsection{Synthetic Experiments}

\begin{figure*}[t]
\begin{center}
\centerline{
\includegraphics[width=\textwidth]{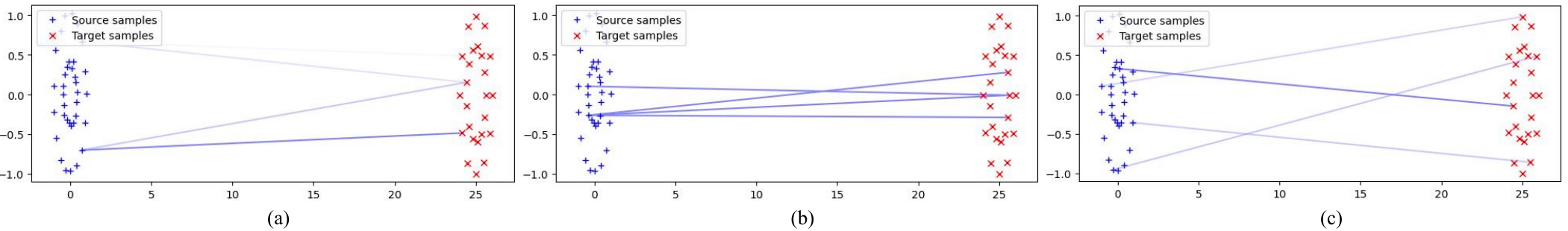}}
\caption{(Best viewed in color.) 
The source samples are shown in blue and the target samples are shown in red. We show an edge between source point $i$ and target point $j$ if $\bgamma_{i, j}>0$. The intensity of the color represents the magnitude of $\bgamma_{i, j}$. (a) SSOT \cite{Blondel2018} results in 4 non-zero entries in $\bgamma$. (b) The top-4 entries of the MMD-UOT transport plan. (c) Proposed GenSparseUOT transport plan obtained with sparsity constraint $K=4$. We can see that the support points of the transport plan obtained by GenSparseUOT are the most diverse, resulting in one-to-one mapping between the source and the target.}
\label{synthF1}
\end{center}
\vskip -0.2in
\end{figure*}

\begin{figure*}[t]
\begin{center}
\centerline{
\includegraphics[width=\textwidth]{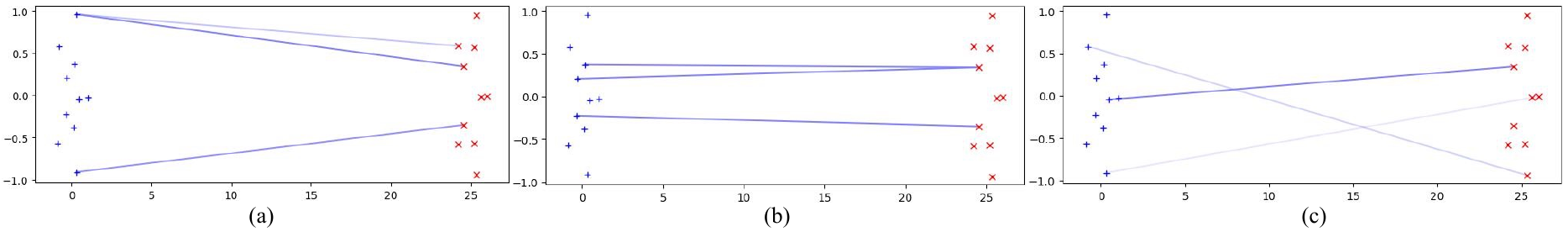}}
\caption{(Best viewed in color.) 
The source samples are shown in blue and the target samples are shown in red. We show an edge between source point $i$ and target point $j$ if $\bgamma_{i, j}>0$. The intensity of the color represents the magnitude of $\bgamma_{i, j}$. (a) SSOT \cite{Blondel2018} results in 3 non-zero entries in $\bgamma$. (b) The top-3 entries of the MMD-UOT transport plan. (c) Proposed GenSparseUOT transport plan obtained with sparsity constraint $K=3$. We can see that the support points of the transport plan obtained by GenSparseUOT are the most diverse, resulting in one-to-one mapping between the source and the target.}
\label{synthF1-10}
\end{center}
\vskip -0.2in
\end{figure*}

\textbf{Diversity in the support set.} A key feature of submodular maximization is obtaining a diverse solution set \cite{DasKempe18} since the selection of the next element essentially involves incremental gain maximization. Hence, we expect the support set of the transport plan learned by our Algorithm \ref{alg:stoch-dash} for the GenSparseUOT problem (\ref{eqn:gensparse}) to exhibit diversity. Diversity in the support set of a transport plan implies primarily learning one-to-one mappings between the source and the target points rather than one-to-many or many-to-one mappings. 


In Figure~\ref{synthF1}, we observe the diversity in the learned transport plan with $K=4$ on the two-dimensional source and target sets. 
%
%
Figure~\ref{synthF1}(a) shows the transport plan obtained using SSOT \cite{Blondel2018}. We observe that SSOT learns many-to-many mappings. 
For Figure~\ref{synthF1}(b), we observe that MMD-UOT \citep{mmd-uot} also has similar issues. It should be noted that both SSOT and MMD-UOT do not provide a direct control over the size of support of the transport plan. Hence, one may require a top-K selection heuristic to learn a transport plan with $K$ non-sparse entries \citep{arase-etal-2023-unbalanced}. 
However, as discussed, proposed Algorithm~\ref{alg:stoch-dash} directly learns a transport plan $\bgamma$ with $K$ non-sparse entries. In addition, as observed in Figure \ref{synthF1}(c), Algorithm~\ref{alg:stoch-dash} learns several one-to-one mappings, highlighting the diversity in the support set of $\bgamma$. 

\textbf{Gradient flow.} Gradient flow constructs the trajectory of a source distribution $\bar{\bmu}$ being transformed to a given target distribution $\bar{\bnu}$. The underlying problem in gradient flow is of solving $\partial_t\bar{\bmu}_t=-\nabla_{\bar{\bmu}_t}D(\bar{\bmu}_t, \bar{\bnu})$ for different timesteps $t>0$, where $D$ is a divergence over measures. Prior works have employed an OT-based divergence \cite{fatras2019learnwass, bomb-ot} and used the Euler scheme for solving this problem \cite{pmlr-v89-feydy19a}. Often, in practice, the gradient updates are performed only over the support of the distribution, keeping the mass values of the distribution fixed to uniform \cite{bomb-ot}. We compare our approach (\ref{eqn:gensparse}) with MMD-UOT \citep{mmd-uot}. 

In our experiment, the initial source distribution and the target distributions are shown in Figure~\ref{grad-flow}(a). Both the source and the target sets have $1000$ data points each. The learning rate for gradient updates is fixed to $0.01$ and the number of iterations is set to $2450$. 
Figures~\ref{grad-flow}(b)~\&~\ref{grad-flow}(c) plot the results for MMD-UOT and the proposed GenSparseUOT formulations. We observe that the solution $\bmu_t$ obtained by GenSparseUOT closely mimics the target distribution, while the solution obtained with MMD-UOT performs poorly. This is interesting because while GenSparseUOT employs MMD-UOT based objective, the additional sparsity constraint and the submodular maximization approach (Algorithm~\ref{alg:stoch-dash}) ensures diversely selected support for GenSparseUOT. This makes the gradients with the proposed approach more informative.

\begin{figure*}
\centering{
\includegraphics[width=\columnwidth]{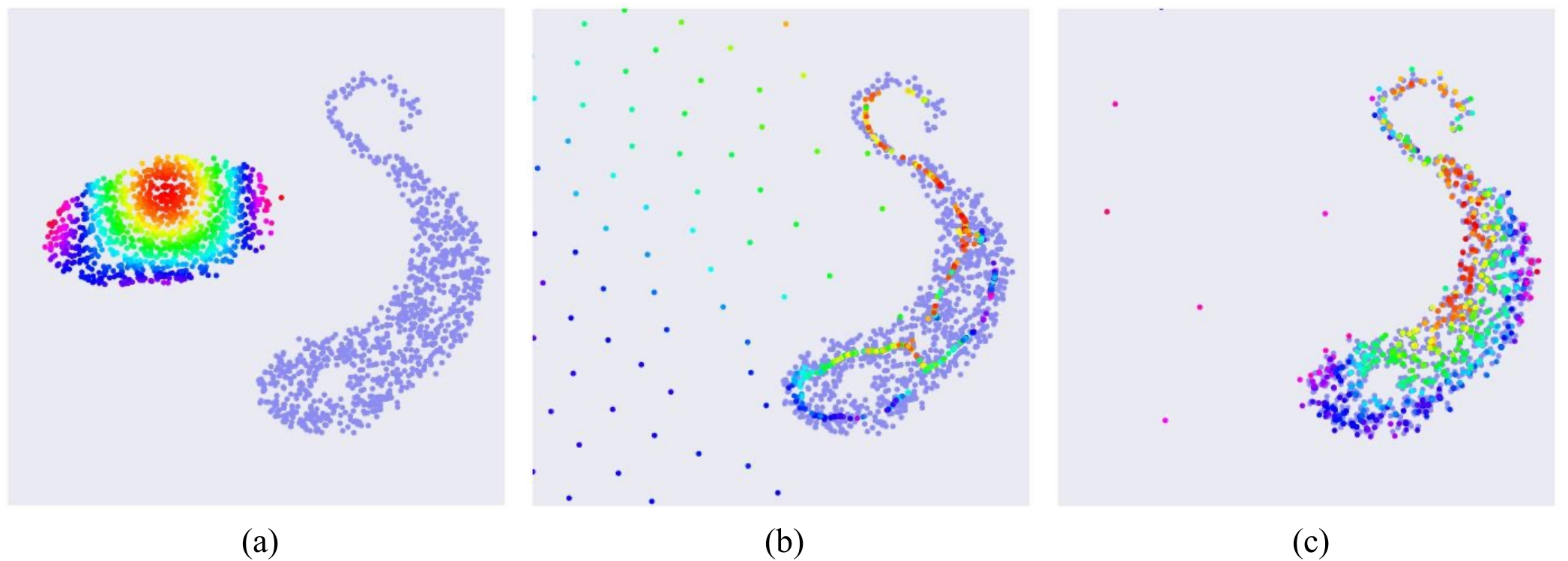}
 \caption{(Best viewed in color.) (a) Initial source points (rainbow color) on the left and target points (in blue) on the right. (b) Gradient Flow results of MMD-UOT (c) Gradient Flow results of proposed GenSparseUOT solved with Algorithm \ref{alg:gensparseOT_dash}.}
\label{grad-flow}
}
\end{figure*}

The details of the hyperparameters used for Fig. \ref{synthF1} are as follows. We consider empirical measures over the two-dimensional source and target samples with no. of source samples as 35 and no. of target samples as 25. The coefficient of regularization hyperparameter for SSOT is chosen from \{0.1, 0.5, 1\}. The result in Fig. \ref{synthF1}(a) has a coefficient of 0.5, which resulted in the OT plan with the most diverse support points. The results obtained with the proposed method and with MMD-UOT use RBF kernel with $\sigma^2$ as $1$ and $\lambda_1$ as 10. Fig. \ref{synthF1-10} shows results with empirical measures over the two-dimensional source and target samples with no. of source and target samples as 10 each. The coefficient of regularization for SSOT is 0.5, which resulted in the OT plan with the most diverse support points. The results obtained with the proposed method and with MMD-UOT use IMQ kernel with $\sigma^2$ as $10$ and $\lambda_1$ as 10.

The details of hyperparameters used for Fig. \ref{grad-flow} are as follows. For the proposed method, we use IMQ kernel with $\sigma^2$ as $10^{-4}$ and $\lambda_1$ as $10^{-1}$. For MMD-UOT, we also use the IMQ kernel but additionally validated over a range of hyperparameters: $\sigma^2\in\{10^{-4}, 10^{-3}, 10^{-2}\},~ \lambda_1\in \{10^{-4}, 10^{-3}, 10^{-2}, 10^{-1}, 1, 10\}$. The best $\sigma^2$ is $10^{-3}$ and the best $\lambda_1= 10^{-2}$.
 
\subsection{Sparse Process Flexibility Design Experiment Details ($\S~\ref{exp-top}$ of the main paper)} \label{app:spfd}
Table \ref{app-table-top} shows the detailed result where we present the mean and the standard deviation of expected profit when run for different seeds. We also show the result with our non-stochastic variant Algorithm \ref{alg:gensparseOT_dash}.

\textbf{Validation details:} The validation data split is generated following the procedure given by \citet{ijcai2023p679}. Both the GSOT's hyperparameters ($\alpha$ and $\rho$) are independently chosen from the set $\{10^{-2}, 10^{-1}, 1, 10\}$. For the proposed approach, the regularization parameter $\lambda_1$ and the kernel parameter $\sigma^2$ are chosen from the sets $\{1, 10, 100\}$ and $\{10^{-3}, 10^{-2}, 10^{-1}, 1\}$, respectively.

The hyperparameters chosen after validation are as follows. The proposed approach uses RBF kernel with $\sigma^2 =1$ and $\lambda_1=100$. The hyperparameters $(\alpha, \rho)$ in GSOT are set as $(10, 1)$. 

\textbf{Timing results:} Table \ref{table: Time T1} compares the time taken for the SPFD experiment by the GSOT method and the proposed approach. This computation time includes the time taken to compute the OT plans and the time taken to compute the overall profit as described in $\S$ \ref{exp-top}. It should be noted that the algorithm to compute the overall profit is the same for both the methods.

\begin{table}[t]
\caption{Expected profit (higher is better) for SPFD experiment with varying network size constraint $l$. Proposed refers to our GenSparseUOT formulation (\ref{eqn:gensparse}). We report the mean and std. deviation with 5 different seed values. We observe that our approach outperforms GSOT.}
\label{app-table-top}
\centering
\begin{tabular}{lccc}
\toprule
Method & $l=100$ & $l=175$ & $l=250$\\
\midrule
    GSOT & $0.014 \pm 0.001$ & $0.031 \pm 0.001$ & $0.044 \pm 0.002$ \\
    Proposed (Algorithm \ref{alg:gensparseOT_dash}) & $0.152 \pm 0.006$ & $0.212 \pm 0.011$ & $0.252\pm0.008$\\
    Proposed (Algorithm \ref{alg:stoch-dash}, $\epsilon= 10^{-2}$)  & $0.166 \pm 0.013$ & $ 0.224 \pm 0.029 $ & $ \mathbf{0.293\pm0.023}$\\
    Proposed (Algorithm \ref{alg:stoch-dash}, $\epsilon= 10^{-3}$) & $ \mathbf{0.167\pm0.017}$ & $ 0.238 \pm 0.021$ & $0.286 \pm 0.017$\\
    Proposed (Algorithm \ref{alg:stoch-dash}, $\epsilon= 10^{-4}$) & $0.147 \pm 0.018$ & $ \mathbf{0.240\pm0.015}$ & $ 0.274 \pm 0.008$\\
\bottomrule
\end{tabular}
\end{table}
 \begin{table}
\caption{Computation time (s) corresponding to Table \ref{table-top} results.}
\label{table: Time T1}
\centering
\setlength{\tabcolsep}{4pt}
\begin{tabular}{lccc}
\toprule
Method & $l=100$ & $l=175$ & $l=250$\\
\midrule
    GSOT & 17.69 & 20.09 & 23.00 \\
    Proposed ($\epsilon= 10^{-2}$) & 6.74 & 11.99 & 17.59 \\
    Proposed ($\epsilon= 10^{-3}$) & 6.33 & 12.03 & 17.93 \\
    Proposed ($\epsilon= 10^{-4}$) & 6.44 & 11.92 & 17.68 \\
\bottomrule
\end{tabular}
\end{table}

\subsection{Word Alignment Experiment Details ($\S~\ref{exp:word-alignment}$ of the main paper)} \label{app:word_alignment}
\textbf{Dataset:} The Wiki dataset consists of 2514 training instances, 533 validation instances, and 1052 test instances. 


\textbf{Experimental setup:} For baseline methods BOT, POT, and KL-UOT, the results were obtained with the code and optimal hyperparameters shared by \citet{arase-etal-2023-unbalanced}. To evaluate the proposed method, we use the same experimental setup as in \citep{arase-etal-2023-unbalanced} and only tune the hyperparameters. Cosine-distance is employed as the cost function. 

\textbf{Validation Details:} The validation data split is the same provided by \citet{arase-etal-2023-unbalanced}. The grid for regularization hyperparameters for BOT, POT, KL-UOT are the same as in their code, i.e., BOT, POT have 50 equally-spaced values between 0 and 1 and KL-UOT has 200 equally spaced values in the log space between -3 and 3. For SSOT, the regularization hyperparameter $\lambda$ is chosen from $\{10^{-7}, 10^{-6}, \ldots, 1\}$. 
For both MMD-UOT and the proposed approach: (a) the kernel function is validated between RBF and IMQ, (b) the kernel hyperparameters are tuned from the set $\{m/8, m/4, m, 4m, 8m\}$, where $m$ denotes the median used in median heuristics \citep{gretton12a}, and (c) $\lambda_1$ is tuned from the set $\{0.1, 1, 10\}$. For the proposed method, $\lambda_2$ is validated on the set $\{0.1,0\}$.

The chosen hyperparameters for SSOT, MMD-UOT, and our approach are as follows. The coefficient of $\ell_2$-norm regularization $\lambda$ for SSOT is $10^{-4}$. 
For MMD-UOT, IMQ kernel is chosen with $\sigma^2=4m$ and $\lambda_1$ as 10. 
For our approach, the chosen kernel is RBF with $\sigma^2=m/8$, $\lambda_1=1$, and $\lambda_2=0.1$. 

\textbf{Results:} Table \ref{table-word} reports the F1 and accuracy scores while Table \ref{table-waln2} reports the corresponding precision and recall scores. 

\textbf{Timing results:} The average time (in seconds) to compute OT plans: (a) BOT, POT, and KL-UOT baselines of \citet{arase-etal-2023-unbalanced} require 0.01 seconds, (b) SSOT requires 0.08 seconds, (c) MMD-UOT requires 0.40 seconds, and (d) our approach requires 1.06 seconds. 

\begin{table}[t]
\caption{Precision and Recall values on the test split of the Wiki dataset. Higher values are better.}
\label{table-waln2}
\centering{
\begin{tabular}{lcccc}
\toprule
& \multicolumn{2}{c}{Null} & \multicolumn{2}{c}{Total}\\
Method & Precision & Recall & Precision & Recall\\
\midrule
    BOT \cite{arase-etal-2023-unbalanced} & \textbf{79.80} &	{80.29} & {95.11} &	\textbf{94.81} \\
    POT \cite{arase-etal-2023-unbalanced} & 66.96 &	79.01  & 93.64 &	94.67\\
    KL-UOT \cite{arase-etal-2023-unbalanced} & 77.31 &	80.16 &	 94.50 &	{94.75} \\
    MMD-UOT \cite{mmd-uot} & 76.41 & 75.42 & 92.73 &	93.57 \\
    SSOT \cite{Blondel2018} & 23.02 &	40.64  & 57.71 & 72.15 \\
    Proposed & {79.14}	& \textbf{80.72} &	 \textbf{95.13} &	94.45 \\
\bottomrule
\end{tabular}
}
\end{table}



\subsection{Column-wise Sparse Transport Plan Experiment Details ($\S~\ref{colsparseexp}$ of the main paper)}\label{colsp}
\subsubsection{Toy Experiment}
\begin{figure}
\centering{
    \includegraphics[scale=0.5]{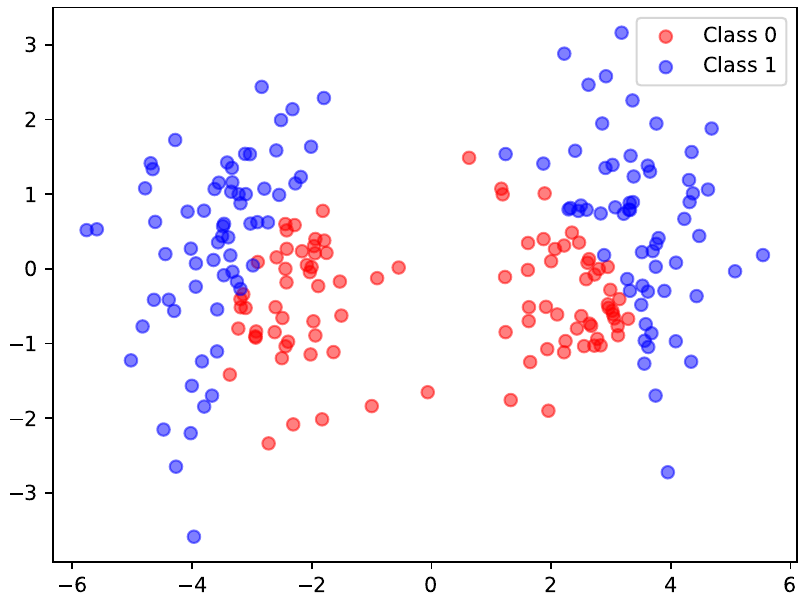}
    \caption{Synthetic data (best viewed in color) for the toy experiment in $\S~\ref{colsparseexp}$.}\label{fig:data-xor}
    }
\end{figure}
Figure \ref{fig:data-xor} shows the data generated for this experiment. Each of the experts is a 1-hidden-layer neural network with GELU as the activation function. The gating function is a single-layer neural network. We employ the Adam optimizer with a constant learning rate scheduler and train for 100 epochs. 

\textbf{Validation Details:} The randomly sampled validation split consists of $15\%$ instances. The regularization hyperparameter for SCOT is from $\{10^{-2}, 10^{-1}, 1, 10, 100\}$. For the proposed approach, $\lambda_1$ is chosen from $\{10^{2}, 10, 1\}$ and $\lambda_2$ is chosen from $\{10, 1\}$. The validation set for other baselines are as follows: 
(a) regularization hyperparameter for SSOT: $\{10^{-2}, 10^{-1}, 1, 10 \}$, (b) KL-UOT: marginal's regularization $\{10^{-1}, 1, 10 \}$, entropic regularization $\{10^{-2}, 10^{-1}, 1, 10\}$, and (c) regularization hyperparameter for entropic OT $\{10^{-2}, 10^{-1}, 1, 10 \}$. 

The hyperparameter chosen for the proposed approach: IMQ-v2 kernel with $\sigma^2=100$ and regularization parameters $\lambda_1=100, \lambda_2=10$. The coefficient of regularization chosen for SCOT is $\lambda=10$.

\subsubsection{CIFAR-10 vs CIFAR-10-ROT}
We follow the default setting of the code provided by \citet{chen2022towards}.

For the proposed method, we fix $\lambda_2=10$, the kernel function as IMQ-v2, and we set the kernel hyperparameter following the median-heuristics \citep{gretton12a}. The $\ell_2$-norm regularization hyperparameter $\lambda$ of SCOT and the marginal regularization hyperparameter $\lambda_1$ of the proposed approach are chosen $\{0.1, 10, 1000\}$. 
The default coefficient of the load-balancing loss taken from the code by \citet{chen2022towards} is $n^2$, where $n$ is the number of experts. The default setting results in a skewed allocation across the experts. On increasing the coefficient of the load-balancing loss to $n^8$, we get a more balanced split of inputs across the experts.
Other hyperparameters are set to the default value in the code by \citet{chen2022towards}. 
The test data consists of 20000 examples, with 10000 examples each from CIFAR-10 and the CIFAR-10 rotated. 

\textbf{Timing results:} The per-epoch computation time in seconds corresponding to Table~\ref{table-cifar} are:  58.75s for (vanilla) MoE, 59.92s for SCOT, and  617.90s for our approach.


\begin{table}[t]
\caption{Duality gap comparison for solving Problem (\ref{eqn:primal}) on varying the regularization hyperparameters. All values are rounded to 6 decimal places. The kernel used is IMQ. A lower duality gap is better.}
\label{table: DG-imq}
\centering
\setlength{\tabcolsep}{4pt}
\begin{tabular}{rrcccc}
\toprule
$\lambda_1$ & $\lambda_2$ & \multicolumn{2}{c}{Proposed solver} & \multicolumn{2}{c}{SCOT solver}\\
 &  & Primal obj. & Duality Gap & Primal obj.  & Duality Gap \\ 
\midrule
0.1 & 0.1 & 0.006073 &	$\mathbf{< 10^{-10}}$ & 0.006073 & 0.000020 \\
1 & 0.1 & 0.040079	& $\mathbf{0.000014}$ & 0.060187 	& 0.021549\\
10 & 0.1 & 0.090064	& $\mathbf{0.015801}$	& 0.502088		& 0.418079 \\
0.1 & 1 & 0.006073	& $\mathbf{< 10^{-10}}$	& 0.006073	& 0.000417 \\
1 & 1 & 0.042633 &	$\mathbf{0.000012}$	& 0.043374	& 0.001185 \\
10 & 1 & 0.092715 & $\mathbf{0.001890}$	& 0.095961 & 0.005033 \\ 
\bottomrule
\end{tabular}
\end{table}

\subsection{Duality Gap Comparison Experiment Details (Section~\ref{subsec:dualitygapcomparison} of the main paper)}\label{app:dualitygap}
We present a comparison of the duality gaps obtained using the proposed solver (Algorithm~\ref{alg:colsparseOT_dash}) and the SCOT-based solver for optimizing (\ref{eqn:primal}). 

\textbf{Adapting the SCOT solver for solving the dual problem (\ref{eqn:dual}) corresponding to the primal problem (\ref{eqn:primal}):} 
Following \citet{liu2023sparsityconstrained}, we use the LBFGS optimizer from \texttt{scipy.optimize} and initialize the dual variables $\balpha, ~\bbeta$ as zero vectors of appropriate dimensions. Using the LBFGS optimizer requires one to pass a module that takes in inputs as the optimization variable ($\balpha, ~\bbeta$ in our case) and returns the objective value in (\ref{eqn:dual}) along with the expression for the gradient of the objective w.r.t. the optimization variables. The gradient of the dual objective w.r.t. $\balpha$ is $\bmu-\frac{\bG^{-1}_1\balpha}{2\lambda_1}-\mathbf{z}\bone$ and the gradient w.r.t. $\bbeta$ is $\bnu - \frac{\bG_2^{-1}\bbeta}{2\lambda_1}-\mathbf{z}^\top \bone$, where $\bz$ is the solution of the sparse projection problem (\ref{eqn:conjugate})  \citep{liu2023sparsityconstrained}. 

We set max-iter for the APGD algorithm used in the proposed solver (Algorithm~\ref{alg:colsparseOT_dash}) as 1000. 
The results with the SCOT solver are also reported with 1000 as the maximum iteration (after confirming that a higher max-iter does not change the duality gap). 

\textbf{Results:} Tables \ref{table: DG-imq} 
and \ref{table: DG-rbf} show the duality gaps associated with the proposed solver and the SCOT solver with RBF and IMQ kernels, respectively, and over different hyperparameter values. 
Table~\ref{table: main-paper-DG-imq-v2} in the main paper shows duality gaps with the solvers employing IMQ-v2 kernel. 
The kernel hyperparameter is fixed according to the median heuristics \cite{gretton12a}, and the column-wise sparsity level is fixed as $K_2=4$. 
We observe that the proposed solver obtains better duality gaps across regularization hyperparameters and kernels.


\begin{table}
\caption{Duality gap comparison for solving Problem (\ref{eqn:primal}) on varying the regularization hyperparameters. All values are rounded to 6 decimal places. The kernel used is RBF. A lower duality gap is better.}
\label{table: DG-rbf}
\centering
\setlength{\tabcolsep}{4pt}
\begin{tabular}{rrcccc}
\toprule
$\lambda_1$ & $\lambda_2$ & \multicolumn{2}{c}{Proposed solver} & \multicolumn{2}{c}{SCOT solver}\\
 &  & Primal obj. & Duality Gap & Primal obj. & Duality Gap \\ 
\midrule
0.1 & 0.1 & 0.002000	& $\mathbf{< 10^{-10}}$ & 0.002000	& $\mathbf{< 10^{-10}}$ \\
1 & 0.1 & 0.019944	& $\mathbf{< 10^{-10}}$ & 0.020003	& 0.000317\\
10 & 0.1 & 0.094129	& $\mathbf{0.057683	}$ & 0.174627 & 0.102349 \\
0.1 & 1 & 0.002000	&  $\mathbf{< 10^{-10}}$	& 0.002000		& $\mathbf{< 10^{-10}}$ \\
1 & 1 & 0.019953	& $\mathbf{< 10^{-10}}$	& 0.020218	& 0.000881\\
10 & 1 & 0.094866	& $\mathbf{0.008412}$	& 0.155751	& 0.064417 \\ 
\bottomrule
\end{tabular}
\end{table}
  \begin{figure}
  \centering{
     \includegraphics[width=0.5\columnwidth]{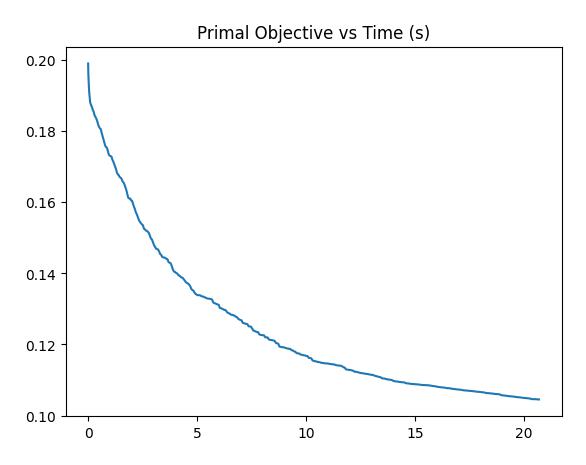}
     \caption{Primal obj vs Time (s) plot for solving the ColSparseUOT formulation using Algorithm~\ref{alg:colsparseOT_dash}. The time is computed on an Intel-i9 CPU.}\label{comp}
     }
 \end{figure}

\subsection{Computation Time}
Figure \ref{comp} shows the objective over time (s) plot while computing column-wise sparse transport plan using Algorithm \ref{alg:colsparseOT_dash}. The source and target measures are empirical measures over two randomly chosen 100-sized batches of CIFAR-10. The kernel used is RBF with median heuristics \cite{gretton12a}. The sparsity level $K_2$ is 4 and $\lambda_1=\lambda_2=10$. The computation is done on an Intel-i9 CPU.


\subsection{Color Transfer Experiment}
Following \citet{Blondel2018}, we perform an experiment of OT-based color transfer. Figure~\ref{color-cat} shows the results with various methods and the sparsity level in the obtained transport map. The coefficient of $\ell_2$-norm regularization hyperparameter for SSOT is 1. For the proposed method and for MMD-UOT, we use RBF kernel with $\sigma^2=10^{-2}$ and $\lambda_1=0.1$.
  \begin{figure}[t]
  \centering{
     \includegraphics[width=\columnwidth]{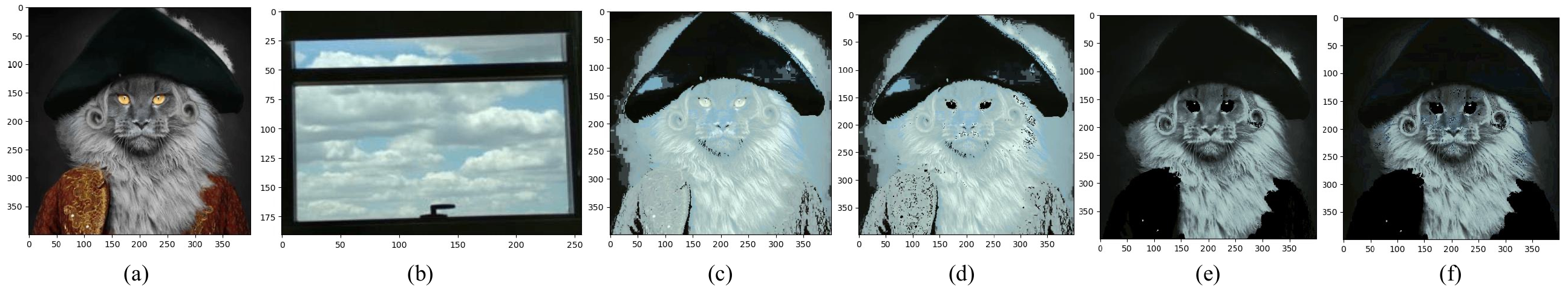}
     \caption{(Best viewed in color.) (a) Image 1 (b) Image 2. (c)-(f) show results (along with the level of sparsity) obtained by transferring the colors from Image 2 to Image 1: (c) OT ($99.22\%$) \cite{KatoroOT} (d) SSOT ($97.86\%$) \cite{Blondel2018} (e) MMD-UOT ($96.12\%$) \cite{mmd-uot} (f) Proposed GenSparseUOT ($99.61\%$).}
     \label{color-cat}
     }
 \end{figure}



\end{document}